\documentclass[letterpaper]{article} % DO NOT CHANGE THIS
\usepackage[preprint]{aaai2027}  % DO NOT CHANGE THIS
\usepackage[hyphens]{url}  % DO NOT CHANGE THIS
\usepackage{graphicx} % DO NOT CHANGE THIS
\usepackage{natbib}  % DO NOT CHANGE THIS AND DO NOT ADD ANY OPTIONS TO IT
\usepackage{caption} % DO NOT CHANGE THIS AND DO NOT ADD ANY OPTIONS TO IT
\usepackage{algorithm}
\usepackage{algorithmic}
\usepackage{graphicx}

\usepackage[most]{tcolorbox}
\usepackage{tikz}
\usetikzlibrary{arrows.meta,positioning,shapes.geometric,calc}

\usepackage{amsmath,amssymb,amsthm,mathtools,mathrsfs}
\usepackage{multirow}
\usepackage{amsfonts}
\usepackage{newfloat}
\usepackage{listings}
\DeclareCaptionStyle{ruled}{labelfont=normalfont,labelsep=colon,strut=off} % DO NOT CHANGE THIS
\floatstyle{ruled}
\newfloat{listing}{tb}{lst}{}
\floatname{listing}{Listing}

\usepackage{booktabs}

\theoremstyle{plain}
\newtheorem{theorem}{Theorem}[section]

\newtheorem{lemma}[theorem]{Lemma}

\newtheorem{corollary}[theorem]{Corollary}
\theoremstyle{definition}

\usepackage{booktabs}
\usepackage{multirow}

\title{
ExpBoN: Exponential-Noise Best-of-$n$ for Efficient Test-Time LLM Alignment
}
\author{
Yanxiao Liu$^{1,\star}$, Sicheng Wan$^{2,\star}$, and Deniz G\"und\"uz$^1$
}

\affiliations{
$^1$Imperial College London,
$^2$University of Washington, $^\star$equal contribution
}

\begin{document}

\maketitle

\begin{abstract}
Best-of-$n$ (BoN) sampling is a simple yet effective inference-time alignment method, but hard maximization provides only coarse control over the trade-off between reward and distribution shift.
Soft Best-of-$n$~\citep{verdun2025soft} provides smoother control and converges to the optimal distribution associated with KL-regularized reward maximization.
In this paper, we introduce ExpBoN, an alternative soft BoN method based on the exponential-noise report-noisy-max mechanism.
It admits an exact finite-$n$ decomposition, which yields exponentially fast convergence in total variation, expected reward, and both directions of KL divergence.
We provide comprehensive theoretical analyses of its convergence and regret behavior. 
We further integrate ExpBoN into the guided speculative inference (GSI) framework~\citep{geuter2025guided}, resulting in ExpGSI, for efficient reward-guided LLM alignment. 
ExpGSI yields substantial reductions in computational cost while maintaining comparable accuracy.
Experiments on MATH500, MMLU-STEM, and Minerva Math with the Qwen2.5-Math and Qwen3 model families show that ExpGSI reduces estimated computation by $14\%$-$39\%$ across candidate budgets for Qwen2.5-Math and by up to $45\%$ at $n=16$ for Qwen3. 
Overall, our results provide a theoretical and algorithmic foundation for exponential-noise BoN and efficient test-time LLM alignment.
\end{abstract}

% Uncomment the following to link to your code, datasets, an extended version or similar.
% You must keep this block between (not within) the abstract and the main body of the paper.
% Make sure that you do not de-anonymize yourself with these links.
% \begin{links}
%     \link{Code}{https://aaai.org/example/code}
%     \link{Datasets}{https://aaai.org/example/datasets}
%     \link{Extended version}{https://aaai.org/example/extended-version}
% \end{links}

\section{Introduction}

Large language models (LLMs) are usually trained to predict the next token rather than to directly optimize human preferences.  As a result, a pretrained model may generate responses that are inconsistent with the intended behavior~\citep{bender2021dangers, bommasani2021opportunities}.
\emph{Alignment} methods seek to modify the model distribution so that high-quality responses receive more probability while the aligned distribution remains close to the reference model.
Training-time and post-training alignment methods include Reinforcement Learning from Human Feedback (RLHF)~\citep{christiano2017deep, ouyang2022training}, SLiC~\citep{zhao2022calibrating}, and Direct Preference Optimization~\citep{rafailov2023direct}.  
In contrast, inference-time alignment is attractive because it can be applied without updating the base-model parameters and is often easier to deploy; examples include controlled decoding~\citep{mudgal2024controlled} and Best-of-$n$ (BoN) sampling~\citep{stiennon2020learning, beirami2024theoretical}.

BoN sampling is a simple inference-time alignment method that draws $n$ candidates from a reference distribution $P$ and returns the one with the largest reward.  
BoN requires no fine-tuning and often gives substantial reward improvements.  
Its performance has been shown to be empirically competitive with or superior to that of RLHF and other alignment schemes in terms of the \textit{reward-KL divergence trade-off}~\citep{mudgal2024controlled}, and it asymptotically approximates the solution of the KL-regularized reward optimization problem~\citep{yang2024asymptotics}. 
Its theoretical properties, coverage, and optimality have been studied extensively \citep{beirami2024theoretical, gui2024bonbon, mroueh2024information, amini2025variational, huang2025best, sriraman2026revisiting}, and it has been further applied to different scenarios~\citep{sun2024fast, raman2025adabon, jinnai2025regularized, kalayci2025optimal, ichihara2025evaluation, kang2026scalable, kobayashi2026flexible, mukherjee2026testtime, hsu2026best}.

Nevertheless, vanilla BoN, which is usually called the \emph{hard} BoN, has two limitations.
First, the number of candidates $n$ provides only coarse control: increasing $n$ simultaneously increases reward optimization and distribution shift.
Second, hard maximization can exploit errors in a learned proxy reward model.
As the candidate pool becomes larger, the selected response may have a high proxy reward but a lower true reward, a phenomenon commonly known as reward hacking or overoptimization~\citep{gao2023scaling, khalaf2026inference, bu2025consistency, aminian2025best}.

Soft Best-of-$n$ (SBoN) was introduced to provide finer control over this trade-off~\citep{verdun2025soft}.
Given samples $X_1,\ldots,X_n\stackrel{\mathrm{iid}}{\sim} P$, SBoN returns $X_i$ with probability proportional to $e^{r(X_i)/\lambda}$, and hence interpolates between sampling from $P$ and hard BoN.
For every temperature $\lambda$, the output distribution of SBoN converges to the \emph{optimal} target distribution~\citep{csiszar1975divergence} as $n$ grows.
It was further analyzed by~\citet{aminian2025best}, who studied its KL divergence from the reference policy and its true-reward regret under proxy-reward misspecification.
See also recent applications of SBoN on reward hacking~\citep{khalaf2026inference}, diffusion language models~\citep{bu2026dprm}, and guided speculative inference (GSI)~\citep{geuter2025guided}.

In this paper, we introduce an alternative soft BoN mechanism, called \textbf{ExpBoN}, that is more sample-efficient, converges to the optimal distribution geometrically, whereas the SBoN guarantees are polynomial. 
While SBoN can be shown to be equivalent to report-noisy-max with Gumbel noise, our ExpBoN instead relies on report-noisy-max with \textbf{exponential noise}.
The use of exponential noise is well known in differential privacy~\citep{mcsherry2007mechanism, dwork2014algorithmic}, where it admits desirable stability and utility properties, and has recently been used in LLM decoding to achieve Pareto optimality in the stability-perplexity trade-off \citep{zhao2024permute}. 
SBoN is closely related to the Poisson functional representation~\citep{li2018strong}, which is widely used in information theory~\citep{liu2025one, liu2025nonasymptotic, liu2024hiding}, and its connection to differential privacy has also been studied~\citep{liu2024universal, flamich2026scalable}. 
It was shown by~\citet{ding2021permute} to be equivalent to the permute-and-flip scheme~\citep{mckenna2020permute}.
Here, we bring this idea to test-time LLM alignment, provide comprehensive theoretic analyses, and show that it has desirable properties for BoN sampling.
In turn, it can be naturally integrated into the GSI framework~\citep{geuter2025guided}, making alignment and decoding even more efficient.

In summary, our main contributions are as follows.

\begin{itemize}
    \item We introduce ExpBoN, a BoN mechanism that utilizes exponential-noise report-noisy-max, a simple yet powerful idea with various desirable properties.
    
    \item We provide a comprehensive theoretical analysis of ExpBoN, including an exact decomposition property, exponentially fast convergence in both TV and KL divergences, and regret guarantees.
    
    \item To further accelerate LLM test-time alignment and decoding, we integrate ExpBoN into the GSI framework~\citep{geuter2025guided}, a natural combination that yields substantial reductions in computational cost while achieving comparable accuracy. 
\end{itemize}

\section{BoN with Exponential Noise}

Let a reference distribution $P$ over finite $\mathcal{X}$ denote the distribution of an LLM's responses to a prompt, and $r:\mathcal{X}\to\mathbb{R}$ denote a reward function that is nonconstant on $\mathcal X_P := \{x\in\mathcal X:P(x)>0\}$ and serves as a scoring guide to evaluate the alignment of LLM's responses with desired human values. 
The \emph{alignment} problem aims to find a distribution $P^\star$ that remains close to $P$ to preserve model quality, while maximizing the expected reward to align with human preferences: 
\begin{align*}
    \max_{P^\star \in \Delta(\mathcal X)} & \mathbb{E}_{P^\star} [r(X)] \\
     \text{subject to } & D_{\mathrm{KL}}(P^\star\Vert P)\leq \epsilon.
\end{align*}
This is equivalent to an information projection problem~\citep{csiszar2003information}, whose solution is known to be an exponentially tilted distribution~\citep{csiszar1975divergence} that assigns greater weight to high-reward responses by tilting $P$.
The optimal tilted distribution is 
\begin{equation*}
    P_\lambda^\star(x)
    :=
    \frac{P(x)e^{r(x)/\lambda}}{\mathbb E_P[e^{r(X)/\lambda}]},
\end{equation*}
where $\lambda>0$ is the temperature parameter that interpolates reward maximization and distribution divergence. 
See \citep{verdun2025soft} for more details.

However, in general we have neither  access to an analytical expression for $P$ nor to $r(\cdot)$, and therefore, we are unable to compute $P^\star$ directly. 
We can only sample from $P$ and evaluate $r(X)$ for these samples.
Accordingly, one simple yet effective test-time alignment scheme is BoN sampling~\citep{stiennon2020learning}, which draws $n$ samples from $P$ and selects the one with the highest reward.
The BoN consists of: 
\begin{enumerate}
    \item Draw $X_1,\ldots, X_n$ i.i.d. from $P$; 

    \item Compute rewards $r(X_1), \ldots, r(X_n)$; 

    \item Return $Y=X_K$ where $K = \mathrm{arg} \max_{k=1,\ldots,n} r(X_k) $.
\end{enumerate}

In~\citep{verdun2025soft}, a more general framework, called soft BoN (SBoN), is proposed to provide a smoother approach that interpolates between reward maximization and distribution divergence. 
SBoN consists of: 

\begin{enumerate}
    \item Draw $X_1,\ldots, X_n$ i.i.d. from $P$; 

    \item Compute rewards $r(X_1),\ldots,r(X_n)$;

    \item Draw $K$ from $\{1, \ldots, n\}$ with respect to 
    \begin{equation}
        \label{eq::SBoN_dist}
        \Pr(K=i) = \frac{e^{r(X_i)/\lambda}}{\sum^n_{j=1} e^{r(X_j)/\lambda}},
    \end{equation}
    and return $Y=X_K$. 
\end{enumerate}

We can see that SBoN recovers BoN as $\lambda\to 0$. 
It can be shown that (Appendix~\ref{subapp::SBoN_gumbel}) the last step above, which samples according to~\eqref{eq::SBoN_dist}, is equivalent to report-noisy-max with Gumbel noise~\citep{Gumbel1954statistical, huijben2022review, li2018strong}, i.e., taking $Y=X_K$, where
\begin{equation*}
K = \arg\max_{k=1,\ldots,n} \left\{ \frac{r(X_k)}{\lambda} + G_k \right\},
\end{equation*}
and $G_k\sim \mathrm{Gumbel}(0,1)$ are independent of each other.

We introduce an alternative soft version of BoN sampling that utilizes exponential noise rather than Gumbel noise: 
\begin{tcolorbox}[
title={Exponential-Noise Best-of-$n$ (ExpBoN)},
colback=white,
colframe=black,
colbacktitle=white,
coltitle=black,
fonttitle=\bfseries,
titlerule=0.8pt,
boxrule=0.8pt,
arc=0pt,
left=6pt,
right=6pt,
top=6pt,
bottom=6pt,
toptitle=1pt,
bottomtitle=1pt
]
\begin{enumerate}
\item Draw $X_1,\ldots,X_n$ i.i.d. from $P$;

\item Compute rewards $r(X_1),\ldots,r(X_n)$;

\item Draw $E_1,\ldots,E_n$ i.i.d. from $\mathrm{Exp}(1)$, independently;

\item Return $Y=X_K$, where
\begin{equation*}
    K = \arg\max_{k=1,\ldots,n}
    \left\{ \frac{r(X_k)}{\lambda} + E_k \right\}.
\end{equation*}

\end{enumerate}
\end{tcolorbox}

The scheme has the same candidate-generation and reward-evaluation procedure as BoN and SBoN, and the same limiting behavior: when $n=1$ or $\lambda\to\infty$, the output follows $P$; when $\lambda\downarrow0$, it approaches hard BoN; and, for every fixed $\lambda>0$, it converges to the tilted target $P_\lambda^\star$ as $n\to\infty$.  
The difference is that utilizing the exponential noise creates an exact tilted component at finite $n$, which in turn provides advantages in both theoretic foundation and experiments.

\section{Theoretic Guarantees}
\label{sec:analysis}

We provide comprehensive theoretical analyses of ExpBoN, and compare to BoN and SBoN~\citep{verdun2025soft} baselines. 
We derive an exact finite-$n$ decomposition representation of ExpBoN, which shows an exponential convergence to the optimal distribution, rather than $O(1/n)$ rate by SBoN. 
% ; in contrast, the SBoN mechanism only has $O(1/n)$ reverse-KL and relative-reward guarantees. 

\subsection{Exact Decomposition and Convergence}
\label{subsec::decomp_converge}

For $k = 1,\ldots,n$ and $E_k \sim \mathrm{Exp}(1)$, we denote 
\begin{equation*}
    M_n:= \max_{k=1,\ldots,n}T_k \quad \text{ where } \quad  T_k
    :={r(X_k)}/{\lambda} + E_k. 
\end{equation*}

The following exact decomposition of the ExpBoN distribution serves as the basis for most of the subsequent results.

\begin{theorem}
\label{thm:ExpBoN_decomposition}
For every $n\geq 1, \lambda>0$, there exists a distribution
$Q_{n,\lambda}$ on $\mathcal X_P$ such that the ExpBoN output distribution
\begin{equation*}
    \widetilde P_{n,\lambda}
    =
    \left(
        1-\rho_\lambda^n
    \right)
    P_\lambda^\star
    +
    \rho_\lambda^n Q_{n,\lambda}, 
\end{equation*}
where $\rho_\lambda := 1- \mathbb E_P\left[e^{(r(X)-r_{\max})/\lambda}\right], \, r_{\max}:=\max_x r(x)$. 
\end{theorem}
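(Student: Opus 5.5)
The plan is to split the ExpBoN output according to whether the maximal noisy score $M_n$ exceeds the threshold $c := r_{\max}/\lambda$. The memorylessness of the exponential distribution should make the conditional output law on the event $A := \{M_n > c\}$ exactly $P_\lambda^\star$. Throughout, take $r_{\max}$ as the maximum over $\mathcal X_P$, the only values that matter, so that $r(X_k)/\lambda \le c$ almost surely.

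\textbf{Step 1 (single candidate).} Fix $k$. For $x\in\mathcal X_P$ we have $c - r(x)/\lambda \ge 0$, so the exponential tail gives
\begin{equation*}
\Pr(X_k = x,\, T_k > c + t) = P(x)\, e^{-(c - r(x)/\lambda + t)} = P(x) e^{(r(x)-r_{\max})/\lambda}\, e^{-t}, \qquad t\ge 0 .
\end{equation*}
Summing over $x$ yields $\Pr(T_k > c) = \mathbb E_P[e^{(r(X)-r_{\max})/\lambda}] = 1-\rho_\lambda$. Conditional on $\{T_k > c\}$, the pair $(X_k,\, T_k - c)$ therefore has product law $P_\lambda^\star \otimes \mathrm{Exp}(1)$.

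\textbf{Step 2 (mass of $A$).} The pairs $(X_k,E_k)$ are i.i.d. across $k$, so $\Pr(A^c) = \Pr(T_k \le c \ \forall k) = \rho_\lambda^n$ and $\Pr(A) = 1-\rho_\lambda^n$.

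\textbf{Step 3 (law of the output on $A$).} Condition on the random set $S := \{k : T_k > c\}$ and on any fixed nonempty value $S = s$. By independence across indices and Step 1, the pairs $(X_k, T_k - c)_{k\in s}$ are i.i.d. with law $P_\lambda^\star\otimes\mathrm{Exp}(1)$, jointly independent of the candidates outside $s$. On $A$ the maximizer $K$ necessarily lies in $S$. It is the argmax of the excesses $T_k - c$ over $k\in s$, which is a.s. unique because these are continuous.

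Since the excesses are independent of the labels $(X_k)_{k\in s}$, the index $K$ is independent of those labels. Hence $X_K \sim P_\lambda^\star$ conditionally on $S=s$, and averaging over $s\neq\emptyset$ gives $\Pr(Y = \cdot \mid A) = P_\lambda^\star$.

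\textbf{Step 4 (the remainder).} Define $Q_{n,\lambda} := \Pr(Y=\cdot \mid A^c)$. This is well defined because $\rho_\lambda>0$: since $r$ is nonconstant on $\mathcal X_P$, some $x$ with $P(x)>0$ has $r(x)<r_{\max}$. It is supported on $\mathcal X_P$ since $Y$ is always one of the $X_k$. The law of total probability then gives
\begin{equation*}
\widetilde P_{n,\lambda} = (1-\rho_\lambda^n) P_\lambda^\star + \rho_\lambda^n Q_{n,\lambda}.
\end{equation*}

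The main obstacle is the rigorous version of Step 3. The naive statement ``$X_K \sim P_\lambda^\star$'' fails unconditionally, because $K$ depends on the rewards. The argument must use the conditioning on $S$ to show that, above the threshold $c$, the selection depends only on the excesses, which are independent of the labels. I would carry this out by an explicit computation: sum over $s$ and $j\in s$ the quantity $\Pr(S=s,\, K=j,\, X_j = x)$, and factor it using the product form from Step 1.
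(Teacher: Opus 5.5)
Your proof is correct, but it follows a different route from the paper's proof of this theorem.

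The paper first proves a lemma on the conditional law of the output given the \emph{value} of the maximum. By exchangeability it computes the joint density $\Pr(Y=x, M_n\in dt) = n e^{-t}P(x)e^{s(x)}\mathbf 1\{s(x)\le t\}F_T(t)^{n-1}\,dt$. It then reads off that, for almost every $t$, the law of $Y$ given $M_n=t$ is $P_\lambda^\star$ restricted to $B_t=\{x: s(x)\le t\}$ and renormalized. For $t\ge s_{\max}$ this restricted law is $P_\lambda^\star$ itself. The decomposition follows by splitting on $\{M_n\ge s_{\max}\}$, whose probability is computed exactly as in your Step 2.

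You instead condition on the positive-probability hit set $S$ and use memorylessness. The labels and excesses of the hits form a product $P_\lambda^\star\otimes\mathrm{Exp}(1)$, so the argmax index is independent of the labels. This avoids densities and conditioning on null events. It also yields an extra fact: given $S$, the winner is uniform on $S$ and independent of the labels. The paper uses exactly this argument later, in the proof of Theorem~\ref{thm:clipped_exp_gsi_bias}, to justify the first-hit early-exit sampler. Your argument works verbatim for any threshold at or above $\max_{\mathcal X_P} s$. So your convention for $r_{\max}$ is harmless, and the same argument covers the envelope $U_C$.

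The paper's route buys information below the threshold. It identifies $Q_{n,\lambda}$ as a mixture of the restricted tilts $P_\lambda^\star(\cdot\mid B_t)$ with $t<s_{\max}$. The monotonicity of these laws in $t$, combined with the coupling $M_{n+1}\ge M_n$, is what the paper uses to show that $\mathbb E_{\widetilde P_{n,\lambda}}[r(X)]$ is nondecreasing in $n$ (Theorem~\ref{thm:ExpBoN_reward_objective}). Your hit-set argument leaves the no-hit branch unstructured beyond its mass.
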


In comparison, SBoN only admits finite-sample approximation bounds~\citep[Lemma 1]{verdun2025soft}. 
We call
\begin{equation*}
\alpha_{n,\lambda}
:=
1-\rho_\lambda^n
\end{equation*}
as the \emph{tilted-hit probability}.  It is the probability of the coupling event ${M_n\geq s_{\max}}$, conditional on which the ExpBoN output law is exactly $P_\lambda^\star$. 
For a fixed temperature $\lambda$, increasing $n$ increases the weight of the exact tilted component.

The exact decomposition leads to tight TV and KL guarantees for ExpBoN. Unlike~\citet{verdun2025soft}, we provide results for both the forward and reverse KL divergences. 
\begin{theorem}
\label{thm:pf_sbon_kl_tv}
For every $n\geq 1$ and $\lambda>0$, assuming $r$ is not constant on $\mathcal X_P$, we have 
\begin{align*}
    q_\lambda\rho_\lambda^n  \leq \mathrm{TV}
    \big(
        P_\lambda^\star,
        \widetilde P_{n,\lambda}
    \big)
    & \leq
    \rho_\lambda^n,  \\ 
2q_\lambda^2\rho_\lambda^{2n}
\leq D_{\mathrm{KL}}
\big(
    \widetilde P_{n,\lambda}
    \Vert
    P_\lambda^\star
\big)
& \leq
\log\left(
    1+
    \left(p_{\min,\lambda}^{-1} - 1\right)
    \rho_\lambda^{2n}
\right), \\
2q_\lambda^2\rho_\lambda^{2n}
 \leq D_{\mathrm{KL}}
\big(
    P_\lambda^\star
    \Vert
    \widetilde P_{n,\lambda}
\big) 
& \leq
\min\Big\{
    -\log \alpha_{n,\lambda}, 
    \\
    & \qquad
    { \left(p_{\min,\lambda}^{-1}-1\right)
        \rho_\lambda^{2n}
    }\big /{\alpha_{n,\lambda}}
\Big\}, 
\end{align*}
where $q_\lambda := P_\lambda^\star(\{x:r(x)=r_{\max}\})$ and $p_{\min,\lambda} := \min_{x\in\mathcal X_P}P_\lambda^\star(x)$; and therefore, we have, as $n \to \infty$, 
\begin{align*}
    \mathrm{TV}
    \big(
        P_\lambda^\star,
        \widetilde P_{n,\lambda}
    \big)
    &=
    \Theta(\rho_\lambda^n), \\
    D_{\mathrm{KL}}
    \big(
        P_\lambda^\star
        \Vert
        \widetilde P_{n,\lambda}
    \big)
    &=
    \Theta(\rho_\lambda^{2n}), \\
    D_{\mathrm{KL}}
    \big(
        \widetilde P_{n,\lambda}
        \Vert
        P_\lambda^\star
    \big)
    &=
    \Theta(\rho_\lambda^{2n}).
\end{align*}
\end{theorem}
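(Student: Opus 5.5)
The plan is to derive every bound from the decomposition of Theorem~\ref{thm:ExpBoN_decomposition}, written as $\widetilde P_{n,\lambda}-P_\lambda^\star=\rho_\lambda^n\,(Q_{n,\lambda}-P_\lambda^\star)$, combined with standard inequalities between $f$-divergences. One structural property of $Q_{n,\lambda}$ is needed beyond the bare statement. I would use the construction in the proof of Theorem~\ref{thm:ExpBoN_decomposition}: $Q_{n,\lambda}$ is the law of the output conditioned on the complementary event $\{M_n<s_{\max}\}$, where $s_{\max}=r_{\max}/\lambda$. Any candidate with $r(X_k)=r_{\max}$ has $T_k\ge s_{\max}$, so on that event no maximizer is ever drawn. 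Hence $Q_{n,\lambda}(A)=0$ for $A:=\{x:r(x)=r_{\max}\}$. I take $r_{\max}$ to be attained on $\mathcal X_P$, which is implicit in the definition of $\rho_\lambda$.

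\emph{TV bounds.} For the upper bound, $\mathrm{TV}(P_\lambda^\star,\widetilde P_{n,\lambda})=\rho_\lambda^n\,\mathrm{TV}(Q_{n,\lambda},P_\lambda^\star)\le\rho_\lambda^n$. For the lower bound, test on the set $A$. Since $Q_{n,\lambda}(A)=0$, we have $\widetilde P_{n,\lambda}(A)=(1-\rho_\lambda^n)q_\lambda$. Therefore $\mathrm{TV}\ge|P_\lambda^\star(A)-\widetilde P_{n,\lambda}(A)|=q_\lambda\rho_\lambda^n$.

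\emph{KL bounds.} Both KL lower bounds follow from Pinsker's inequality $D_{\mathrm{KL}}\ge 2\,\mathrm{TV}^2$ together with the TV lower bound. For the forward upper bound, use $D_{\mathrm{KL}}(\widetilde P\Vert P^\star)\le\log(1+\chi^2(\widetilde P\Vert P^\star))$, which is Jensen's inequality applied to $\log$. The decomposition gives $\chi^2(\widetilde P_{n,\lambda}\Vert P_\lambda^\star)=\rho_\lambda^{2n}\chi^2(Q_{n,\lambda}\Vert P_\lambda^\star)$. Moreover
\[
\chi^2(Q\Vert P^\star)=\sum_x Q(x)^2/P^\star(x)-1\le p_{\min,\lambda}^{-1}\sum_x Q(x)^2-1\le p_{\min,\lambda}^{-1}-1,
\]
using that $Q$ is supported on $\mathcal X_P$. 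For the reverse direction, the decomposition gives $\widetilde P_{n,\lambda}\ge\alpha_{n,\lambda}P_\lambda^\star$ pointwise. Hence $\log(P^\star/\widetilde P)\le-\log\alpha_{n,\lambda}$, and averaging under $P^\star$ gives the first term of the minimum. For the second term, use $D_{\mathrm{KL}}(P^\star\Vert\widetilde P)\le\chi^2(P^\star\Vert\widetilde P)=\sum_x(\widetilde P-P^\star)^2/\widetilde P$. Bound the denominator below by $\alpha_{n,\lambda}P^\star$ to get at most $\rho_\lambda^{2n}\chi^2(Q\Vert P^\star)/\alpha_{n,\lambda}$, and then apply the same $\chi^2$ bound as before.

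\emph{Asymptotics and main obstacle.} For the $\Theta$ statements, note that $0<\rho_\lambda<1$. It is below $1$ because $A$ has positive $P$-mass, and it is positive because $r$ is nonconstant on $\mathcal X_P$. Then $\alpha_{n,\lambda}\to1$, and the constants $q_\lambda>0$ and $p_{\min,\lambda}>0$ do not depend on $n$. Using $\log(1+u)\le u$, the upper and lower bounds match up to constants. The only step beyond routine inequalities is the support property $Q_{n,\lambda}(A)=0$ used in the TV lower bound. I expect this to be the main point to verify: that the decomposition's $Q_{n,\lambda}$ is indeed the law conditional on $\{M_n<s_{\max}\}$, and that ties at $s_{\max}$ have probability zero because the noise is continuous.
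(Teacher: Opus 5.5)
Your proposal is correct and follows the paper's proof essentially line for line. Both use the decomposition $\widetilde P_{n,\lambda}=(1-\rho_\lambda^n)P_\lambda^\star+\rho_\lambda^n Q_{n,\lambda}$, the support property $Q_{n,\lambda}(A_\lambda)=0$ (recorded at the end of the paper's proof of Theorem~\ref{thm:ExpBoN_decomposition}) for the TV lower bound, and Pinsker for both KL lower bounds, while the upper bounds rest on $\widetilde P_{n,\lambda}\ge\alpha_{n,\lambda}P_\lambda^\star$ and the $\chi^2$ comparisons with $\chi^2(Q_{n,\lambda}\Vert P_\lambda^\star)\le p_{\min,\lambda}^{-1}-1$. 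One small correction: your assumption that $r_{\max}$ is attained on $\mathcal X_P$ is not implied by the definition of $\rho_\lambda$; the paper also uses it tacitly, since it is what makes $q_\lambda>0$, so stating it explicitly is a gain in precision.
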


Compared to SBoN, which only has $D_{\mathrm{KL}}(P_\lambda^\star\Vert P^{\mathrm S}_{n,\lambda})=O(1/n)$ \citep{verdun2025soft}, for every finite $(P,r,\lambda)$, ExpBoN improves these polynomial convergence rates.

\begin{figure*}[t]
% \begin{figure}[htpb]
% \begin{figure}[H]
    \centering 
    \includegraphics[scale = 0.39]{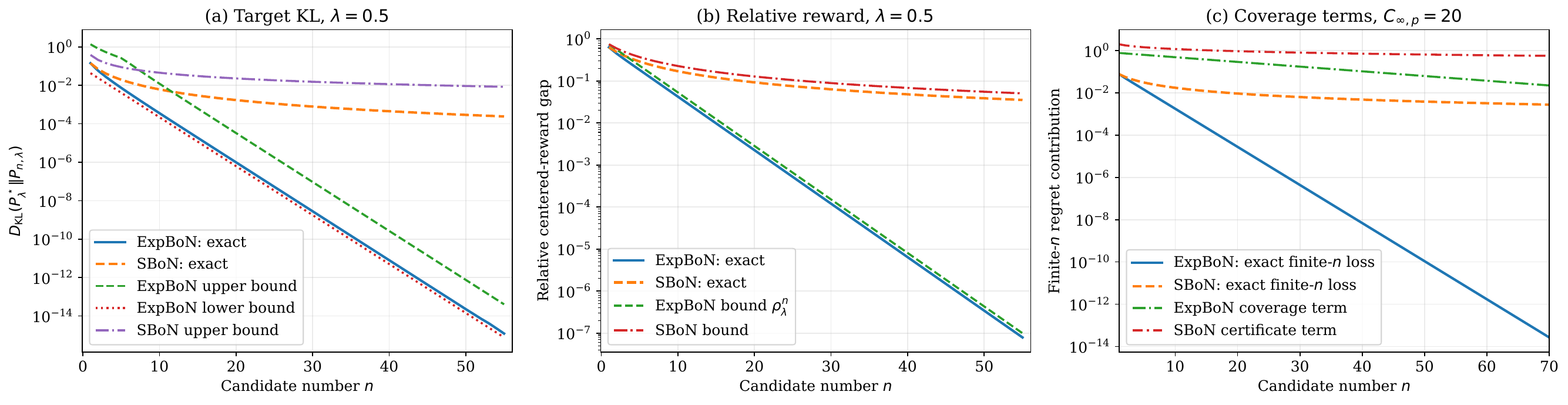} 
    \caption{
    Finite-$n$ comparison of ExpBoN and SBoN on the example $P=(0.75,0.20,0.05)$.  
    Panels~(a) and (b) use $r=(0.016,0.164,0.820)$ and $\lambda=0.5$. 
    Panel~(a) compares the KL divergences with the ExpBoN  bounds in Theorem~\ref{thm:pf_sbon_kl_tv} and the $O(1/n)$ SBoN bound. 
    Panel~(b) compares the relative centered-reward gaps with the ExpBoN bound $\rho_\lambda^n$ from Corollary~\ref{cor:ExpBoN_relative_reward} and the corresponding $O(1/n)$ SBoN bound. 
    Panel~(c) uses $r$ as the true reward and $r_p=r+0.1(1,0,-1)$ as the proxy reward at $\lambda=0.5$, and compares the exact finite-$n$ true-reward losses from the common proxy-tilted limit with the ExpBoN bound in Theorem~\ref{thm:ExpBoN_coverage_regret} and the SBoN bound. 
    The results show the exponential rates of ExpBoN in contrast to SBoN's polynomial bounds.
    }
    \label{fig:convergence} 
\end{figure*}

\subsection{Reward and KL Convergence}
\label{subsec:ExpBoN_reward_convergence}

We then study how finite-$n$ ExpBoN approaches the tilted target in expected reward and in the KL-regularized objective. 
We provide a one-sided reward guarantee and show that the expected reward improves monotonically with samples. 

\begin{theorem}[Reward convergence]
\label{thm:ExpBoN_reward_objective}
For $n\geq1$ and $\lambda>0$,
\begin{equation*}
    0
    \leq
    \mathbb E_{P_\lambda^\star}[r(X)]
    -
    \mathbb E_{\widetilde P_{n,\lambda}}[r(X)]
    \leq
    \Delta_r\rho_\lambda^n
\end{equation*}
where $\Delta_r:=r_{\max}-r_{\min}$. 
For every $\lambda > 0$, $\mathbb E_{\widetilde P_{n,\lambda}}[r(X)]$ is nondecreasing in $n$, and converges to
$\mathbb E_{P_\lambda^\star}[r(X)]$ as $n\to\infty$. 
\end{theorem}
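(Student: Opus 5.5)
The plan is to work directly with the noisy scores $T_k=s(X_k)+E_k$, where $s(x):=r(x)/\lambda$ and $s_{\max}:=r_{\max}/\lambda$. The key step is to show that $\mathbb E_{\widetilde P_{n,\lambda}}[r(X)]=\mathbb E[g(M_n)]$ for a nondecreasing function $g$. All three claims (the upper bound, the lower bound, and monotonicity) then follow.

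\emph{Conditional law of the winner.} Each pair $(X_k,T_k)$ has joint density $P(x)\,e^{-(t-s(x))}\mathbf 1\{t\ge s(x)\}$ in $t$. The $T_k$ are i.i.d.\ and continuous, so ties occur with probability zero. Conditional on $M_n=t$, the winning index is one of the $k$ with $T_k=t$, and the other candidates only need $T_j<t$, which does not depend on the winner's label. Hence the winner's label has law proportional to $P(x)e^{s(x)}\mathbf 1\{s(x)\le t\}$, that is,
\begin{equation*}
\Pr(Y=x\mid M_n=t)=P_\lambda^\star\bigl(x \,\big|\, r(X)\le \lambda t\bigr).
\end{equation*}
For $t\ge s_{\max}$ this is exactly $P_\lambda^\star$, which recovers the decomposition of Theorem~\ref{thm:ExpBoN_decomposition}. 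In particular, $Q_{n,\lambda}$ is the mixture of $P_\lambda^\star(\cdot\mid r\le\lambda t)$ over the law of $M_n$ conditioned on $M_n<s_{\max}$. I would also check that $\Pr(M_n<s_{\max})=\rho_\lambda^n$, which is consistent with that theorem.

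\emph{Bounds.} Define $g(t):=\mathbb E_{P_\lambda^\star}[r(X)\mid r(X)\le\lambda t]$. Truncating a distribution from above at a higher threshold cannot lower its conditional mean, so $g$ is nondecreasing. Moreover $g(t)=\mathbb E_{P_\lambda^\star}[r]$ for all $t\ge s_{\max}$, and $g(t)\ge r_{\min}$ wherever it is defined. From the decomposition,
\begin{equation*}
\mathbb E_{P_\lambda^\star}[r]-\mathbb E_{\widetilde P_{n,\lambda}}[r]=\rho_\lambda^n\bigl(\mathbb E_{P_\lambda^\star}[r]-\mathbb E_{Q_{n,\lambda}}[r]\bigr).
\end{equation*}
Since $\mathbb E_{Q_{n,\lambda}}[r]$ is an average of values $g(t)$ with $t<s_{\max}$, it lies in $[r_{\min},\mathbb E_{P_\lambda^\star}[r]]$. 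This gives both $0\le$ and $\le\Delta_r\rho_\lambda^n$.

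\emph{Monotonicity and convergence.} Couple the $(n+1)$-sample run with the $n$-sample run by appending one more pair $(X_{n+1},E_{n+1})$. Then $M_{n+1}\ge M_n$ pointwise. Since $g$ is nondecreasing, $\mathbb E[g(M_{n+1})]\ge\mathbb E[g(M_n)]$, so the expected reward is nondecreasing in $n$. Convergence to $\mathbb E_{P_\lambda^\star}[r]$ follows from the upper bound, because $\rho_\lambda<1$ when $r$ is nonconstant on $\mathcal X_P$.

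The main obstacle is making the conditional-law computation rigorous for a discrete $\mathcal X$ with continuous noise. This means justifying the conditioning on $M_n=t$ via densities and handling $t$ below $\min s$, where the event has zero probability. I would write $\Pr(Y=x,\,M_n\in dt)=n\,P(x)e^{s(x)-t}\mathbf 1\{s(x)\le t\}\,F(t)^{n-1}\,dt$, where $F$ is the common CDF of the $T_k$. Factoring $x$ from $t$ in this expression gives the claim directly.
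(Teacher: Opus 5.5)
Your proposal is correct and follows essentially the same route as the paper. The paper likewise computes the winner's law given $M_n=t$ as $P_\lambda^\star$ restricted to $\{s\le t\}$ via the density $n\,P(x)e^{s(x)-t}\mathbf 1\{s(x)\le t\}F(t)^{n-1}\,dt$, and notes that the truncated conditional mean is nondecreasing in $t$. It uses the exact decomposition to write the gap as $\rho_\lambda^n\bigl(\mathbb E_{P_\lambda^\star}[r]-\mathbb E_{Q_{n,\lambda}}[r]\bigr)$ with the bracket in $[0,\Delta_r]$, and gets monotonicity in $n$ from the coupling $M_{n+1}\ge M_n$.
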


Define the centered reward $ \overline r(x):=r(x)-r_{\min}$, we have:
\begin{corollary}
\label{cor:ExpBoN_relative_reward}
For every $n\geq1$ and $\lambda>0$,
\begin{equation*}
    0
    \leq
    \frac{
        \mathbb E_{P_\lambda^\star}[\overline r(X)]
        -
        \mathbb E_{\widetilde P_{n,\lambda}}[\overline r(X)]
    }{
        \mathbb E_{P_\lambda^\star}[\overline r(X)]
    }
    \leq
    \rho_\lambda^n.
\end{equation*}
\end{corollary}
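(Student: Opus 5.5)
The corollary follows from Theorem~\ref{thm:ExpBoN_reward_objective} with a sharper use of the decomposition in Theorem~\ref{thm:ExpBoN_decomposition}. Centering cancels in differences, so the numerator equals $\mathbb E_{P_\lambda^\star}[r(X)]-\mathbb E_{\widetilde P_{n,\lambda}}[r(X)]$. That quantity is nonnegative by Theorem~\ref{thm:ExpBoN_reward_objective}, which gives the lower bound.

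For the denominator, $r$ is nonconstant on $\mathcal X_P$ and $P_\lambda^\star$ has full support on $\mathcal X_P$. Hence $\mathbb E_{P_\lambda^\star}[\overline r(X)]>0$ and the ratio is well defined. Here I take $r_{\min}$ to be the minimum of $r$ over $\mathcal X_P$, or over $\mathcal X$; either way $\overline r\ge 0$ on $\mathcal X_P$.

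For the upper bound I would use the decomposition directly rather than the cruder $\Delta_r\rho_\lambda^n$ bound. By Theorem~\ref{thm:ExpBoN_decomposition},
\begin{equation*}
\mathbb E_{P_\lambda^\star}[\overline r]-\mathbb E_{\widetilde P_{n,\lambda}}[\overline r]
=\rho_\lambda^n\bigl(\mathbb E_{P_\lambda^\star}[\overline r]-\mathbb E_{Q_{n,\lambda}}[\overline r]\bigr).
\end{equation*}
Since $Q_{n,\lambda}$ is supported on $\mathcal X_P$, where $\overline r\ge 0$, we have $\mathbb E_{Q_{n,\lambda}}[\overline r]\ge 0$. Therefore the right-hand side is at most $\rho_\lambda^n\,\mathbb E_{P_\lambda^\star}[\overline r]$. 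Dividing by the positive denominator gives the bound $\rho_\lambda^n$.

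The only delicate point is the support and nonnegativity issue: we need $\overline r\ge0$ on the support of $Q_{n,\lambda}$. Theorem~\ref{thm:ExpBoN_decomposition} states that $Q_{n,\lambda}$ lives on $\mathcal X_P$, so this holds once $r_{\min}$ is defined as a minimum at least as small as every value of $r$ on $\mathcal X_P$.
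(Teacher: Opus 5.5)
Your proposal is correct and follows the paper's proof essentially line for line: positivity of $\mathbb E_{P_\lambda^\star}[\overline r(X)]$ from full support of $P_\lambda^\star$ and nonconstancy of $r$, nonnegativity of the numerator from Theorem~\ref{thm:ExpBoN_reward_objective}, and the upper bound from the exact decomposition of Theorem~\ref{thm:ExpBoN_decomposition} together with $\mathbb E_{Q_{n,\lambda}}[\overline r(X)]\geq 0$. Your remark on whether $r_{\min}$ is taken over $\mathcal X_P$ or $\mathcal X$ is a harmless extra bit of care that the paper leaves implicit.
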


% Define the KL-regularized objective
% \begin{equation*}
%     \mathcal F_\lambda(Q)
%     :=
%     \mathbb E_Q[r(X)]
%     -
%     \lambda D_{\mathrm{KL}}(Q\Vert P).
% \end{equation*}

% \begin{corollary}[Regularized-objective convergence]
% \label{cor:ExpBoN_regularized_objective}
% For every $n\geq1$ and $\lambda>0$,
% \begin{align*}
% &\mathcal F_\lambda(P_\lambda^\star)
% -
% \mathcal F_\lambda(\widetilde P_{n,\lambda})
% \\
% &\qquad=
% \lambda
% D_{\mathrm{KL}}
% \big(
%     \widetilde P_{n,\lambda}
%     \Vert
%     P_\lambda^\star
% \big)
% \\
% &\qquad\leq
% \lambda
% \log\left(
%     1+
%     \left(p_{\min,\lambda}^{-1}-1\right)
%     \rho_\lambda^{2n}
% \right).
% \end{align*}
% \end{corollary}

Compared to \citet[Theorem~3]{verdun2025soft}, which provided an $O(1/n)$ relative-reward guarantee for SBoN, Corollary~\ref{cor:ExpBoN_relative_reward} gives an exponential $O(\rho_\lambda^n)$ guarantee.

The following result shows that the policy-to-reference KL of finite-$n$ ExpBoN approaches that of the ideal tilted policy.

\begin{corollary}
\label{cor:ExpBoN_reference_kl_localization}
For every $n\geq1$ and $\lambda>0$,
\begin{align*}
&D_{\mathrm{KL}}
\big(
P_\lambda^\star\Vert P
\big) - 
\frac{\Delta_r}{\lambda}\rho_\lambda^n \leq
D_{\mathrm{KL}}
\big(
\widetilde P_{n,\lambda}\Vert P
\big)
\\
&\leq
D_{\mathrm{KL}}
\big(
P_\lambda^\star\Vert P
\big)
+
\log\big(
1+
(p_{\min,\lambda}^{-1}-1) 
\rho_\lambda^{2n}
\big).
\end{align*}
\end{corollary}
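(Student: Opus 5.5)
The plan is to use the standard identity linking the reference KL to the KL from the tilted target. For any distribution $Q$ on $\mathcal X_P$, write $\log\frac{Q}{P}=\log\frac{Q}{P_\lambda^\star}+\log\frac{P_\lambda^\star}{P}$ and use $\log\frac{P_\lambda^\star(x)}{P(x)} = r(x)/\lambda - \log Z_\lambda$, where $Z_\lambda:=\mathbb E_P[e^{r(X)/\lambda}]$. This gives
\begin{equation*}
D_{\mathrm{KL}}(Q\Vert P)=D_{\mathrm{KL}}(Q\Vert P_\lambda^\star)+\frac{\mathbb E_Q[r(X)]}{\lambda}-\log Z_\lambda .
\end{equation*}
Applying the same identity with $Q=P_\lambda^\star$ gives $D_{\mathrm{KL}}(P_\lambda^\star\Vert P)=\mathbb E_{P_\lambda^\star}[r]/\lambda-\log Z_\lambda$. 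Subtracting, with $Q=\widetilde P_{n,\lambda}$ (supported on $\mathcal X_P$ by Theorem~\ref{thm:ExpBoN_decomposition}), we obtain the exact relation
\begin{equation*}
D_{\mathrm{KL}}(\widetilde P_{n,\lambda}\Vert P)-D_{\mathrm{KL}}(P_\lambda^\star\Vert P)
= D_{\mathrm{KL}}(\widetilde P_{n,\lambda}\Vert P_\lambda^\star)-\frac{1}{\lambda}\Big(\mathbb E_{P_\lambda^\star}[r]-\mathbb E_{\widetilde P_{n,\lambda}}[r]\Big).
\end{equation*}

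For the upper bound, Theorem~\ref{thm:ExpBoN_reward_objective} says the reward gap in parentheses is nonnegative, so we may drop that term. We then bound $D_{\mathrm{KL}}(\widetilde P_{n,\lambda}\Vert P_\lambda^\star)$ by $\log\big(1+(p_{\min,\lambda}^{-1}-1)\rho_\lambda^{2n}\big)$ using Theorem~\ref{thm:pf_sbon_kl_tv}.

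For the lower bound, we drop the nonnegative KL term $D_{\mathrm{KL}}(\widetilde P_{n,\lambda}\Vert P_\lambda^\star)\ge 0$. The reward gap is at most $\Delta_r\rho_\lambda^n$, again by Theorem~\ref{thm:ExpBoN_reward_objective}. This gives $D_{\mathrm{KL}}(\widetilde P_{n,\lambda}\Vert P)\ge D_{\mathrm{KL}}(P_\lambda^\star\Vert P)-\frac{\Delta_r}{\lambda}\rho_\lambda^n$.

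The only step needing care is the decomposition identity. It requires all quantities to be finite and $\widetilde P_{n,\lambda}\ll P$, which holds because ExpBoN outputs samples drawn from $P$. Once that identity is in place, both bounds follow immediately from the earlier theorems.
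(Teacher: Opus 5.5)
Your proposal is correct and follows essentially the same route as the paper. Both use the identity $D_{\mathrm{KL}}(Q\Vert P)-D_{\mathrm{KL}}(P_\lambda^\star\Vert P)=D_{\mathrm{KL}}(Q\Vert P_\lambda^\star)+(\mathbb E_Q[r(X)]-\mathbb E_{P_\lambda^\star}[r(X)])/\lambda$ at $Q=\widetilde P_{n,\lambda}$: the lower bound comes from nonnegativity of KL together with Theorem~\ref{thm:ExpBoN_reward_objective}, and the upper bound from the sign of the reward gap plus the bound on $D_{\mathrm{KL}}(\widetilde P_{n,\lambda}\Vert P_\lambda^\star)$ in Theorem~\ref{thm:pf_sbon_kl_tv}; your derivation via $\log(P_\lambda^\star/P)=r/\lambda-\log Z_\lambda$ simply supplies the line the paper states without proof.
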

This shows that ExpBoN approaches the reward and reference-KL coordinates of the optimal tilted policy exponentially fast; we can also derive a distribution-free form 
\begin{equation*}
D_{\mathrm{KL}}
\big(
\widetilde P_{n,\lambda}\Vert P
\big)
\leq
\min\left\{
\log n,
{\Delta_r^2}/(2\lambda^2)
\right\},
\end{equation*}
which has the correct quadratic order near the reference regime, whereas \citet[Lemma~4.1]{aminian2025best} is linear in the inverse temperature.

\subsection{Results via Permute-and-Flip}

As discussed previously, report-noisy-max with exponential noise is equivalent to permute-and-flip~\citep{mckenna2020permute, ding2021permute}, which enjoys desirable properties.
Here, we utilize them to establish the following result.

\begin{theorem}[Utility Dominance]
\label{thm:ExpBoN_utility_dominance}
For every $n\geq1$, $\lambda>0$, and candidates $x_{1:n}=(x_1,\ldots,x_n)$, let $K_{\mathrm E}$ and $K_{\mathrm S}$
denote the indices selected by ExpBoN and SBoN, respectively.  Then
\begin{equation*}
    \mathbb E
    \left[
        r(X_{K_{\mathrm E}})
        \,\middle|\,
        X_{1:n}=x_{1:n}
    \right]
    \geq
    \mathbb E
    \left[
        r(X_{K_{\mathrm S}})
        \,\middle|\,
        X_{1:n}=x_{1:n}
    \right].
\end{equation*}
Consequently,
\begin{equation*}
    \mathbb E_{\widetilde P_{n,\lambda}}[r(X)]
    \geq
    \mathbb E_{P^{\mathrm{SBoN}}_{n,\lambda}}[r(X)].
\end{equation*}
\end{theorem}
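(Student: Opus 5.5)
Conditioned on $X_{1:n}=x_{1:n}$, the only randomness is the noise, so the claim reduces to comparing two selection rules over the fixed score vector $s_k := r(x_k)/\lambda$. The plan is to use the equivalence between exponential-noise report-noisy-max and permute-and-flip~\citep{ding2021permute}. Permute-and-flip with scores $s_k$ visits the candidates in a uniformly random order and accepts candidate $k$ with probability $p_k := e^{s_k - s^\ast}$, where $s^\ast := \max_j s_j$. It repeats until some candidate is accepted; this terminates because the maximizer has $p_k = 1$. SBoN conditionally selects $k$ with probability proportional to $e^{s_k}$, by~\eqref{eq::SBoN_dist} and the Gumbel equivalence of Appendix~\ref{subapp::SBoN_gumbel}. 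Thus the problem is to show that the permute-and-flip selection law puts at least as much weight on high-reward candidates as the softmax law, in expectation of $s_k$.

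\textbf{Step 1 (permute-and-flip dominates softmax).} I would follow the argument of \citet{mckenna2020permute}. Order the candidates so that $s_1\ge s_2\ge\cdots\ge s_n$. Write $\pi^{\mathrm E}_k$ and $\pi^{\mathrm S}_k$ for the selection probabilities of ExpBoN and SBoN. The goal is to show that $\pi^{\mathrm E}$ first-order stochastically dominates $\pi^{\mathrm S}$ with respect to score: for every threshold $t$,
\begin{equation*}
\sum_{k:s_k\ge t}\pi^{\mathrm E}_k \ \ge\ \sum_{k:s_k\ge t}\pi^{\mathrm S}_k .
\end{equation*}
Summing these inequalities over the distinct score levels, or integrating the tail, then gives $\sum_k \pi^{\mathrm E}_k s_k \ge \sum_k \pi^{\mathrm S}_k s_k$. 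Multiplying by $\lambda$ yields the conditional inequality.

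\textbf{Step 2 (the main obstacle).} The dominance can be obtained cleanly through a likelihood-ratio property. If the ratio $\pi^{\mathrm E}_k/\pi^{\mathrm S}_k$ is nondecreasing in $s_k$, then stochastic dominance follows from the standard monotone likelihood ratio argument. Permute-and-flip has the explicit form
\begin{equation*}
\pi^{\mathrm E}_k = p_k\,\mathbb E_\sigma\Bigl[\prod_{j\text{ before }k}(1-p_j)\Bigr],
\end{equation*}
where the product runs over candidates preceding $k$ in a uniform random permutation $\sigma$ of the other indices. Equivalently, $\pi^{\mathrm E}_k = p_k\int_0^1 \prod_{j\ne k}(1-u\,p_j)\,du$, where $u\sim\mathrm{Unif}[0,1]$ models the fraction of the permutation placed before $k$. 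Since $\pi^{\mathrm S}_k\propto p_k$, the ratio is proportional to
\begin{equation*}
c_k := \int_0^1\prod_{j\ne k}(1-u\,p_j)\,du .
\end{equation*}
This quantity excludes factor $k$ itself. A larger $p_k$ means the excluded factor $(1-u p_k)$ is smaller, so the remaining product is larger. Concretely, $c_k = \int_0^1 \prod_{j}(1-up_j)/(1-up_k)\,du$, and the integrand is increasing in $p_k$. Hence $c_k$ is nondecreasing in $s_k$, which is the monotone ratio needed.

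The delicate point is verifying the integral representation of the random-order probability. I would check it by conditioning on the rank of $k$ in the permutation and using exchangeability of the remaining candidates. Ties among scores are harmless: tied candidates share the same $c_k$, so the ratio remains monotone.

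\textbf{Step 3 (unconditioning).} The conditional inequality holds for every $x_{1:n}$, and both methods draw candidates from the same distribution $P^{\otimes n}$. Taking the expectation over $X_{1:n}\sim P^{\otimes n}$ therefore yields $\mathbb E_{\widetilde P_{n,\lambda}}[r(X)]\ge \mathbb E_{P^{\mathrm{SBoN}}_{n,\lambda}}[r(X)]$, because the output laws are the mixtures of the conditional selection laws.
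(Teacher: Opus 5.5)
Your proof is correct. Its outline matches the paper's: condition on $x_{1:n}$, identify exponential-noise report-noisy-max with permute-and-flip and SBoN with the softmax mechanism at the same temperature, compare the two, and average over $P^{\otimes n}$.

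The difference is in the comparison step. The paper cites the ``never worse'' theorem of McKenna and Sheldon (equivalently Zhao et al., Theorem 3.1(3)) as a black box. You prove that comparison yourself, using $\pi^{\mathrm E}_k=p_kc_k$, $\pi^{\mathrm S}_k\propto p_k$, and a monotone-likelihood-ratio argument. The citation buys brevity. Your route is self-contained and gives a stronger conclusion. Because $\pi^{\mathrm E}_k/\pi^{\mathrm S}_k$ is nondecreasing in $s_k$, the ExpBoN selection first-order stochastically dominates the SBoN selection. Hence the inequality holds for $\mathbb E[\phi(r(X_K))]$ for every nondecreasing $\phi$, not only for $\phi(r)=r$.

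Two simplifications are available. First, you need neither the permute-and-flip equivalence nor the rank-conditioning check. The integral formula comes directly from the exponential-noise density, since the integrand vanishes for $t<s^\ast$. Substituting $u=e^{s^\ast-t}$ gives
\begin{equation*}
\Pr(K_{\mathrm E}=k\mid x_{1:n})=\int_{s^\ast}^\infty e^{s_k-t}\prod_{j\neq k}\bigl(1-e^{s_j-t}\bigr)\,dt
=p_k\int_0^1\prod_{j\neq k}(1-up_j)\,du .
\end{equation*}
Second, the monotonicity is cleanest as a comparison across indices for a fixed vector $p$:
\begin{equation*}
c_k-c_l=(p_k-p_l)\int_0^1 u\prod_{j\neq k,l}(1-up_j)\,du .
\end{equation*}
This is nonnegative whenever $p_k\geq p_l$, since all $p_j\leq1$, and it covers ties. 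It also replaces the phrase ``the integrand is increasing in $p_k$'', which is loose because $c_k$ as a function does not depend on $p_k$. It further avoids dividing by $1-up_k$, which vanishes at $u=1$ when $k$ is the maximizer.

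A minor wording point: permute-and-flip is a single pass over one permutation. It always terminates because the maximizer accepts with probability one, so nothing repeats.
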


ExpBoN inherits the \emph{Pareto-optimality} of permute-and-flip~\citep{mckenna2020permute}: 
within the equally stable candidate-level schemes, no one can uniformly improve the expected selection reward of ExpBoN over all reward vectors.

\subsection{Regret Analysis}
\label{subsec:regret_analysis}

While the preceding results characterize how accurately finite-$n$ ExpBoN approaches a prescribed distribution, this target may be defined using an imperfect proxy reward.  
Faster convergence does not by itself guarantee better true-reward performance and may amplify reward over-optimization.  
We therefore study the true-reward regret of ExpBoN, in parallel with the regret analysis of SBoN by~\citet{aminian2025best}.

Suppose the proxy reward and the true reward are $r_p$ and $r_t$, respectively. 
For $u\in\{p,t\}$, let  $\widetilde P_{n,\lambda}^{u}$ be the ExpBoN output distribution when $r_u$ is used, and define
\begin{equation*}
    P_{\lambda,u}^\star(x)
    :=
    \frac{
        P(x)e^{r_u(x)/\lambda}
    }{
        \mathbb E_P[e^{r_u(X)/\lambda}]
    }. 
\end{equation*}
Let $d(x):=r_t(x)-r_p(x)$ and define the centered reward-error quantity $\omega_d :=\max_{x\in\mathcal X_P}d(x) - \min_{x\in\mathcal X_P}d(x)$. 

\begin{theorem}[Proxy-Reward Stability]
\label{thm:ExpBoN_proxy_stability}
The ExpBoN policies induced by the true and proxy rewards satisfy
\begin{align*}
&\max\Big\{
D_{\mathrm{KL}}
\big(
    \widetilde P_{n,\lambda}^{t}
    \Vert
    \widetilde P_{n,\lambda}^{p}
\big),
D_{\mathrm{KL}}
\big(
    \widetilde P_{n,\lambda}^{p}
    \Vert
    \widetilde P_{n,\lambda}^{t}
\big)
\Big\} \\
& \qquad  \quad \leq
\min\Big\{
    \frac{(n-1)\operatorname{Var}_P(d(X))}{\lambda^2},
    \frac{\omega_d}{\lambda}
    \tanh\left(
        \frac{\omega_d}{2\lambda}
    \right)
\Big\}, 
\end{align*}
and also the TV guarantee
\begin{align*}
& \mathrm{TV}
\big(
    \widetilde P_{n,\lambda}^{t},
    \widetilde P_{n,\lambda}^{p}
\big)  \\
& \qquad \leq
\min \Bigg\{
    \sqrt{
        \frac{(n-1)\operatorname{Var}_P(d(X))}{2\lambda^2}
    },
    \tanh\left(
        \frac{\omega_d}{2\lambda}
    \right),
    1
\Bigg\}.
\end{align*}
\end{theorem}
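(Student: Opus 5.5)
The plan is to compare the two ExpBoN policies through the joint law of the candidates and the selected index, and then use data processing. Both $\widetilde P^{t}_{n,\lambda}$ and $\widetilde P^{p}_{n,\lambda}$ are images of $(X_{1:n},K)$ under the map $(x_{1:n},k)\mapsto x_k$, and $X_{1:n}\sim P^{\otimes n}$ is common to both. Write $\pi_u(\cdot\mid x_{1:n})$ for the ExpBoN selection law on $\{1,\ldots,n\}$ under $r_u$. Then, in either direction,
\begin{equation*}
D_{\mathrm{KL}}\big(\widetilde P^{t}_{n,\lambda}\Vert \widetilde P^{p}_{n,\lambda}\big)\le \mathbb E_{X_{1:n}}\Big[D_{\mathrm{KL}}\big(\pi_t(\cdot\mid X_{1:n})\Vert\pi_p(\cdot\mid X_{1:n})\big)\Big].
\end{equation*}
The same holds for TV. Conditionally on $x_{1:n}$, the two score vectors differ by $\delta_i=d(x_i)/\lambda$. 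The argmax of $s_k+E_k$ is invariant under adding a common constant to all $s_k$. So we may recentre $\delta$ freely: by $\min_j d(x_j)/\lambda$ for the $\omega_d$ bound, and by the empirical mean $\bar\delta$ for the variance bound.

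\textbf{The $\tanh$ bound.} After subtracting the minimum, $0\le\delta_i\le\omega_d/\lambda$. Raising scores by nonnegative amounts at most $\varepsilon:=\omega_d/\lambda$ is a one-sided (monotone) perturbation. For report-noisy-max with $\mathrm{Exp}(1)$ noise, equivalently permute-and-flip, this gives the pointwise ratio bound $e^{-\varepsilon}\le\pi_t(k\mid x_{1:n})/\pi_p(k\mid x_{1:n})\le e^{\varepsilon}$. This is the standard $\varepsilon$-DP argument with one-sided sensitivity, proved by shifting the noise of the index $k$ in the coupling $T_k=s_k+E_k$ and using the memoryless density $e^{-t}$.

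The output probabilities $\widetilde P^u_{n,\lambda}(y)$ are averages over $x_{1:n}$ of $\sum_k \pi_u(k\mid x_{1:n})\mathbf 1\{x_k=y\}$. The ratio bound therefore passes directly to the outputs. The classical inequality for two distributions whose log-likelihood ratio lies in $[-\varepsilon,\varepsilon]$ then gives
\begin{equation*}
D_{\mathrm{KL}}\le\varepsilon\tanh(\varepsilon/2),\qquad \mathrm{TV}\le\frac{e^\varepsilon-1}{e^\varepsilon+1}=\tanh(\varepsilon/2),
\end{equation*}
in both directions. To verify the KL inequality, maximize $\sum_y P(y)\log(P(y)/Q(y))$ subject to the ratio constraints. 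The extremal case is a two-point distribution with ratios $e^{\pm\varepsilon}$, which yields $\varepsilon\tanh(\varepsilon/2)$.

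\textbf{The variance bound.} Here I aim for the per-candidate quadratic estimate
\begin{equation*}
D_{\mathrm{KL}}\big(\pi_t(\cdot\mid x_{1:n})\Vert\pi_p(\cdot\mid x_{1:n})\big)\le\sum_{i=1}^n(\delta_i-\bar\delta)^2 .
\end{equation*}
Since $\mathbb E\sum_i(d(X_i)-\overline{d})^2=(n-1)\operatorname{Var}_P(d(X))$ for i.i.d.\ samples with empirical mean $\overline d$, taking expectations gives the first term of the KL bound. The TV term $\sqrt{(n-1)\operatorname{Var}_P(d)/(2\lambda^2)}$ then follows from Pinsker, and the trivial bound $1$ completes the minimum.

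To prove the quadratic estimate, I would interpolate $s(\tau)=s_p+\tau(\delta-\bar\delta\mathbf 1)$ and bound the KL by the integrated Fisher information of the finite family $\tau\mapsto\pi_{s(\tau)}$. The target is to show the Fisher information matrix of the exponential-noise selection law, as a function of the score vector, is dominated by the identity on $\mathbf 1^\perp$. A first attempt is to express $\partial\pi_k/\partial s_j$ from the explicit formula $\pi_k=\int e^{-(t-s_k)}\mathbf 1\{t\ge s_k\}\prod_{j\ne k}F(t-s_j)\,dt$. Another is to reduce to a comparison with the Gumbel/softmax case, where the Fisher matrix is $\mathrm{diag}(\pi)-\pi\pi^\top\preceq I$.

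\textbf{Main obstacle.} I expect this Fisher-information (or direct $\chi^2$) bound to be the hardest step. The exponential density is discontinuous, so smoothness arguments on the noise fail: KL on the noise level is infinite. The argument must therefore work at the level of the discrete selection probabilities. If the Fisher route stalls, I would fall back on bounding $D_{\mathrm{KL}}\le\chi^2$ and controlling $\pi_t(k)/\pi_p(k)-1$ to first order in $\delta-\bar\delta$ using the ratio bounds above.
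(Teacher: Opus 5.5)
Your reduction to the conditional selection laws via the chain rule and data processing is correct, and so is your range-based part: the ratio bound $e^{-\varepsilon}\le \pi_t(k\mid x_{1:n})/\pi_p(k\mid x_{1:n})\le e^{\varepsilon}$ from shifting the noise of index $k$, followed by the chord bounds $\varepsilon\tanh(\varepsilon/2)$ and $\tanh(\varepsilon/2)$. This is essentially the paper's Lemma~\ref{lem:exp_noise_stability} argument.

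The gap is the variance-sensitive bound. You leave the per-pool estimate $D_{\mathrm{KL}}(\pi_t\Vert\pi_p)\le\sum_i(\delta_i-\bar\delta)^2$ unproven, and both routes you sketch fail as stated. Take $n=2$ with equal proxy scores. Then $\pi_1=1-\tfrac12e^{-\Delta}$ for $\Delta=s_1-s_2\ge0$ and $\pi_1=\tfrac12e^{\Delta}$ for $\Delta<0$. The Fisher information in the unit direction $(1,-1)/\sqrt2\in\mathbf 1^\perp$ is therefore $2$. So it is not dominated by the identity on $\mathbf 1^\perp$, and since it is four times the softmax value $1/2$, no comparison with the Gumbel case can give that domination. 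In the same example with $\delta=(\varepsilon,0)$, one gets $\chi^2(\pi_t\Vert\pi_p)=(1-e^{-\varepsilon})^2\approx\varepsilon^2$. This is twice the target $\sum_i(\delta_i-\bar\delta)^2=\varepsilon^2/2$, which is itself tight to second order there. So the fallback $D_{\mathrm{KL}}\le\chi^2$ cannot reach the stated constant, however finely you control the ratios.

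The missing observation is that no new analysis is needed. Your ratio argument never used that the spread is the global $\omega_d/\lambda$. It therefore holds for each realized pool with its own range $D(x_{1:n}):=(\max_i d(x_i)-\min_i d(x_i))/\lambda$. The chord bound and $\tanh z\le z$ then give
\begin{equation*}
D_{\mathrm{KL}}\big(\pi_t(\cdot\mid x_{1:n})\Vert\pi_p(\cdot\mid x_{1:n})\big)\le D\tanh(D/2)\le \frac{D^2}{2}\le\frac{1}{\lambda^2}\sum_{i=1}^n\big(d(x_i)-\overline d_n\big)^2 .
\end{equation*}
The last step uses $(\max_i d_i-\min_i d_i)^2\le 2[(\max_i d_i-\overline d_n)^2+(\overline d_n-\min_i d_i)^2]\le 2\sum_i(d_i-\overline d_n)^2$. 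The same bound holds in the reverse KL direction. Averaging with your chain-rule display and using $\mathbb E\sum_i(d_i-\overline d_n)^2=(n-1)\operatorname{Var}_P(d(X))$ gives the variance KL term. Pinsker then gives the variance TV term. This is exactly how the paper proves it.
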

In comparison, \citet[Lemma~4.2]{aminian2025best} analyzed the true-versus-proxy SBoN policy using an uncentered exponential mean-square error. 
Theorem~\ref{thm:ExpBoN_proxy_stability} can be  sharper for small centered errors, though there is no uniform dominance.

We then provide guarantees for the regret of ExpBoN. We first define $\Delta_t := \max_{x\in\mathcal X_P}r_t(x) - \min_{x\in\mathcal X_P}r_t(x)$ and 
\begin{equation*}
    \operatorname{Reg}_{n,\lambda}
    :=
    r_{t,\max}
    -
    \mathbb E_{\widetilde P_{n,\lambda}^{p}}[r_t(X)].
\end{equation*}

\begin{theorem}[Direct Regret Bound]
\label{thm:ExpBoN_direct_regret}
For every $n\geq1$ and $\lambda>0$,
\begin{equation*}
    \operatorname{Reg}_{n,\lambda}
    \leq
    B_t(\lambda)
    +
    \Delta_t
    \left[
        \tanh\left(
            \frac{\omega_d}{4\lambda}
        \right)
        +
        \rho_{\lambda,p}^{\,n}
    \right],
\end{equation*}
where $B_t(\lambda) := r_{t,\max} - \mathbb E_{P_{\lambda,t}^\star}[r_t(X)]$ is the softening bias of the true tilted policy and 
$\rho_{\lambda,u} := 1- \mathbb E_P \left[ e^{(r_u(X)-r_{u,\max})/\lambda} \right]$ is the tilted-hit failure probability associated with reward $r_u$.
\end{theorem}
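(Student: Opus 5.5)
We decompose the regret into three terms by inserting the true tilted policy and the proxy tilted policy:
\begin{align*}
\operatorname{Reg}_{n,\lambda}
&=
\underbrace{r_{t,\max}-\mathbb E_{P_{\lambda,t}^\star}[r_t]}_{B_t(\lambda)}
+
\big(\mathbb E_{P_{\lambda,t}^\star}[r_t]-\mathbb E_{P_{\lambda,p}^\star}[r_t]\big)
\\
&\quad+
\big(\mathbb E_{P_{\lambda,p}^\star}[r_t]-\mathbb E_{\widetilde P_{n,\lambda}^{p}}[r_t]\big).
\end{align*}
Each expectation difference of $r_t$ is bounded by $\Delta_t$ times the corresponding total variation. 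This holds because $r_t$ takes values in an interval of length $\Delta_t$ on $\mathcal X_P$, and all the distributions involved are supported on $\mathcal X_P$. So it remains to bound the two TV distances.

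\textbf{Third term.} Apply Theorem~\ref{thm:pf_sbon_kl_tv} with reward $r_p$. This gives $\mathrm{TV}(P_{\lambda,p}^\star,\widetilde P_{n,\lambda}^p)\le\rho_{\lambda,p}^n$, and hence the contribution $\Delta_t\rho_{\lambda,p}^n$. If $r_p$ is constant on $\mathcal X_P$, then $\rho_{\lambda,p}=0$ and the ExpBoN output is exactly $P=P^\star_{\lambda,p}$, so the bound still holds. Alternatively, one can argue directly from the decomposition in Theorem~\ref{thm:ExpBoN_decomposition}, which gives TV at most $\rho^n_{\lambda,p}$.

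\textbf{Second term.} We need $\mathrm{TV}(P_{\lambda,t}^\star,P_{\lambda,p}^\star)\le\tanh(\omega_d/(4\lambda))$. Note that $P_{\lambda,t}^\star(x)\propto P_{\lambda,p}^\star(x)e^{d(x)/\lambda}$, so $P_{\lambda,t}^\star$ is an exponential tilt of $P_{\lambda,p}^\star$ by a function whose range is $\omega_d/\lambda$. Write $a=\min d/\lambda$ and $b=\max d/\lambda$, and consider the likelihood ratio $L=e^{d/\lambda}/\mathbb E_{P^\star_{\lambda,p}}[e^{d/\lambda}]$, which lies in $[e^{a}/Z,\,e^{b}/Z]$ with $\mathbb E L=1$. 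We have $\mathrm{TV}=\mathbb E(L-1)_+$.

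The key step is the extremal problem of maximizing $\mathbb E(L-1)_+$ subject to $L=ce^{\phi}$, $\phi\in[a,b]$, and $\mathbb E L=1$. By convexity the extremum is attained at two-point distributions of $\phi$ concentrated on $\{a,b\}$ with masses $1-w$ and $w$. In that case
\[
\mathrm{TV}=\frac{w(1-w)(e^{b}-e^{a})}{(1-w)e^{a}+we^{b}}.
\]
Maximizing over $w$ gives $w^*=1/(1+e^{(b-a)/2})$, and substituting yields
\[
\frac{e^{\omega/2}-1}{e^{\omega/2}+1}=\tanh(\omega/4),
\]
where $\omega=\omega_d/\lambda$. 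This is the classical bound on the TV distance between distributions with bounded log-likelihood-ratio range. I expect verifying the reduction to two-point laws carefully to be the main technical step. My plan there is to fix the normalizer and note that $\mathbb E(L-1)_+$ is linear in the law of $\phi$ once the constraint is accounted for, so extreme points suffice.

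Summing the three terms gives the stated bound.
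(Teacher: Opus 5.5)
Your proposal is correct and follows the paper's proof. The paper also splits off $B_t(\lambda)$, bounds the remaining gap by $\Delta_t\,\mathrm{TV}(P_{\lambda,t}^\star,\widetilde P_{n,\lambda}^{p})$, and applies the triangle inequality through $P_{\lambda,p}^\star$ (equivalent to your three-term split). It takes $\rho_{\lambda,p}^{\,n}$ from the exact decomposition and gets $\tanh(\omega_d/(4\lambda))$ from the chord bound on a mean-one likelihood ratio with log-oscillation $\omega_d/\lambda$, optimized over its endpoints, which is your two-point computation. For the step you flagged, the chord inequality $(L-1)_+\le (L-m)(M-1)/(M-m)$ for $L\in[m,M]$ with $m\le 1\le M$ gives concentration at the endpoints in one line. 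The extreme-point argument alone only yields two-point laws at possibly interior points.
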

Theorem~\ref{thm:ExpBoN_direct_regret} separates the total regret into softening bias, proxy-target mismatch, and a finite-$n$ target-approximation term.  
The last term is advantageous when the proxy-tilted target is trustworthy; under severe proxy misspecification, faster target approximation need not imply better true reward.

\citet{aminian2025best} studied the special scenario where $0\leq r_t(x),r_p(x)\leq R_{\max}$ for all $x\in\mathcal X_P$. 
Here we define: 
\begin{equation*}
    \overline\varepsilon_\lambda
    :=
    \inf_{c\in\mathbb R}
    \lambda
    \log
    \mathbb E_P
    \left[
        \exp\left(
            \frac{(d(X)-c)^2}{\lambda}
        \right)
    \right],
\end{equation*}
which measures the proxy-reward error, and we have: 

\begin{theorem}[Coverage-Based Regret Bound]
\label{thm:ExpBoN_coverage_regret}
For every $n\geq1$ and $\lambda>0$,
\begin{align*}
\operatorname{Reg}_{n,\lambda}
\leq{}&
\sqrt{\overline\varepsilon_\lambda}
\left(
    \sqrt{C_{\infty,t}}
    +
    \sqrt{C_{\infty,p}}
\right)
+
\lambda\log C_{\infty,t}
\\
&+
R_{\max}
\left(
    1-\frac{1}{C_{\infty,p}}
\right)^n.
\end{align*}
where $C_{\infty,u}:=P(\left\{ x\in\mathcal X_P: r_u(x)=r_{u,\max} \right\})^{-1}$. 
\end{theorem}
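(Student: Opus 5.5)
The plan is to split the regret into an infinite-$n$ part, governed by the proxy-tilted target $P_{\lambda,p}^\star$, and a finite-$n$ remainder:
\begin{equation*}
\operatorname{Reg}_{n,\lambda}
=
\big(r_{t,\max}-\mathbb E_{P_{\lambda,p}^\star}[r_t(X)]\big)
+
\big(\mathbb E_{P_{\lambda,p}^\star}[r_t(X)]-\mathbb E_{\widetilde P_{n,\lambda}^{p}}[r_t(X)]\big).
\end{equation*}
The first part should give the $\sqrt{\overline\varepsilon_\lambda}$ and $\lambda\log C_{\infty,t}$ terms, and the second part the $R_{\max}$ term. Throughout, $0\le r_t,r_p\le R_{\max}$ on $\mathcal X_P$.

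\emph{Finite-$n$ term.} Apply Theorem~\ref{thm:ExpBoN_decomposition} with reward $r_p$. This gives $\widetilde P_{n,\lambda}^{p}=(1-\rho_{\lambda,p}^n)P_{\lambda,p}^\star+\rho_{\lambda,p}^nQ$, so the second bracket equals $\rho_{\lambda,p}^n\big(\mathbb E_{P_{\lambda,p}^\star}[r_t]-\mathbb E_Q[r_t]\big)\le R_{\max}\rho_{\lambda,p}^n$. On the maximizing set of $r_p$ the integrand $e^{(r_p-r_{p,\max})/\lambda}$ equals $1$, so $\mathbb E_P[e^{(r_p-r_{p,\max})/\lambda}]\ge 1/C_{\infty,p}$. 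Hence $\rho_{\lambda,p}\le 1-1/C_{\infty,p}$, which is the last term of the bound.

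\emph{Two coverage facts.} First, each tilt has density ratio
\begin{equation*}
\frac{dP_{\lambda,u}^\star}{dP}=\frac{e^{(r_u-r_{u,\max})/\lambda}}{\mathbb E_P[e^{(r_u-r_{u,\max})/\lambda}]}\le C_{\infty,u}.
\end{equation*}
Second, replacing $d$ by $d-c$ (equivalently $r_t$ by $r_t-c$) leaves $P_{\lambda,t}^\star$ unchanged and shifts $r_{t,\max}$ and every expectation of $r_t$ equally, so the regret is invariant. We may therefore take $c$ to be (near) the infimizer in $\overline\varepsilon_\lambda$; if the infimum is not attained, pass to a limit at the end. Jensen's inequality gives $\mathbb E_P[d^2]\le\lambda\log\mathbb E_P[e^{d^2/\lambda}]\le\overline\varepsilon_\lambda$. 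Combining this with the density-ratio bound and Cauchy--Schwarz,
\begin{equation*}
\big|\mathbb E_{P_{\lambda,u}^\star}[d]\big|\le\sqrt{C_{\infty,u}\,\mathbb E_P[d^2]}\le\sqrt{C_{\infty,u}\overline\varepsilon_\lambda},\qquad u\in\{p,t\}.
\end{equation*}

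\emph{Infinite-$n$ term, via the Gibbs variational principle.} Let $F_u(Q):=\mathbb E_Q[r_u]-\lambda D_{\mathrm{KL}}(Q\Vert P)$, which is maximized by $P_{\lambda,u}^\star$. Then the chain is
\begin{align*}
\mathbb E_{P_{\lambda,p}^\star}[r_t]
&\ge \mathbb E_{P_{\lambda,p}^\star}[r_p]-\sqrt{C_{\infty,p}\overline\varepsilon_\lambda}
\ge F_p(P_{\lambda,p}^\star)-\sqrt{C_{\infty,p}\overline\varepsilon_\lambda}
\ge F_p(P_{\lambda,t}^\star)-\sqrt{C_{\infty,p}\overline\varepsilon_\lambda}\\
&= F_t(P_{\lambda,t}^\star)-\mathbb E_{P_{\lambda,t}^\star}[d]-\sqrt{C_{\infty,p}\overline\varepsilon_\lambda}
\ge F_t(P_{\lambda,t}^\star)-\sqrt{\overline\varepsilon_\lambda}\big(\sqrt{C_{\infty,t}}+\sqrt{C_{\infty,p}}\big).
\end{align*}
Finally, compare $P_{\lambda,t}^\star$ with $P$ conditioned on the $r_t$-maximizing set. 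That distribution has reward $r_{t,\max}$ and KL divergence $\log C_{\infty,t}$ from $P$, so $F_t(P_{\lambda,t}^\star)\ge r_{t,\max}-\lambda\log C_{\infty,t}$. Substituting into the chain bounds the first bracket, and adding the finite-$n$ term gives the theorem.

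The main obstacle is this infinite-$n$ step. Subtracting the two tilted expectations directly leaves an uncontrolled term $\mathbb E_{P_{\lambda,t}^\star}[r_p]-\mathbb E_{P_{\lambda,p}^\star}[r_p]$. The variational chain is designed to avoid that term: it moves between $r_t$ and $r_p$ only through $d$, and only under measures whose density against $P$ is bounded by the coverage constants. The centering step also matters, since without it the bound would involve an uncentered error.
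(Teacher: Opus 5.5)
Your proof is correct and follows essentially the same route as the paper. The shared ingredients are Gibbs optimality of $P_{\lambda,p}^\star$, Cauchy--Schwarz with the density-ratio bound $dP_{\lambda,u}^\star/dP\le C_{\infty,u}$ and Jensen after centering $d$, the comparison of $P_{\lambda,t}^\star$ with $P$ conditioned on the $r_t$-maximizing set, and the exact decomposition with $\rho_{\lambda,p}\le 1-1/C_{\infty,p}$. The only cosmetic difference is bookkeeping: the paper splits the infinite-$n$ part into the softening bias $B_t(\lambda)$ and a proxy-mismatch term and lets $\lambda D_{\mathrm{KL}}(P_{\lambda,t}^\star\Vert P)$ cancel between them, whereas your single chain through $F_t(P_{\lambda,t}^\star)$ never exposes that term.
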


\citet[Theorem~5.2]{aminian2025best} gives the same reward-model-error and regularization-bias structure, but its displayed finite-$n$ term is $O(n^{-1/2})$ for fixed coverage, whereas the corresponding ExpBoN term is $R_{\max}\left(1-\frac{1}{C_{\infty,p}}\right)^n$. 
Therefore, making the finite-$n$ term at most $\eta$ needs $n = O\left(C_{\infty,p}\log\frac{R_{\max}}{\eta}\right)$ for ExpBoN, compared with $n=O\left(\frac{R_{\max}^2(C_{\infty,p}-1)}{\eta^2}\right)$ for the SBoN~\citep{aminian2025best}.

\paragraph{Numerical Example}

We use the example same in \citet[Figure~1]{verdun2025soft} to verify our theorems. 
Consider 
\begin{equation*}
    P=(0.75,0.20,0.05),
    \quad
    r=(0.016,0.164,0.820).
\end{equation*}
At $\lambda=0.5$, we obtain the parameter $\rho_\lambda=0.745928$ and $P_\lambda^\star=(0.591233,0.211972,0.196795)$. 
For every $n$, we compute and compare the finite-$n$ SBoN and ExpBoN marginal distributions. 
For example, at $n=10$, we have
\begin{align*}
&D_{\mathrm{KL}}
\big(
    P_\lambda^\star
    \Vert
    \widetilde P_{10,\lambda}
\big)
=3.58\times10^{-4}, \\
& D_{\mathrm{KL}}
\big(
    P_\lambda^\star
    \Vert
    P^{\mathrm{SBoN}}_{10,\lambda}
\big) =6.16\times10^{-3}, 
\end{align*}
and the relative centered reward gaps are $0.0423$ and $0.1706$, respectively. 
The theoretical results are compared in Figure~\ref{fig:convergence}.
The same separation holds on empirical candidate pools generated by the
draft model in the experiments of Section \ref{sec:experiments}, both for reward-only
scores and for the clipped score used by ExpGSI
(Appendix~\ref{sec:Additional experimental results},
Figure~\ref{fig:realconv}).

\begin{figure*}[t]
\centering
\includegraphics[width=0.9\textwidth]{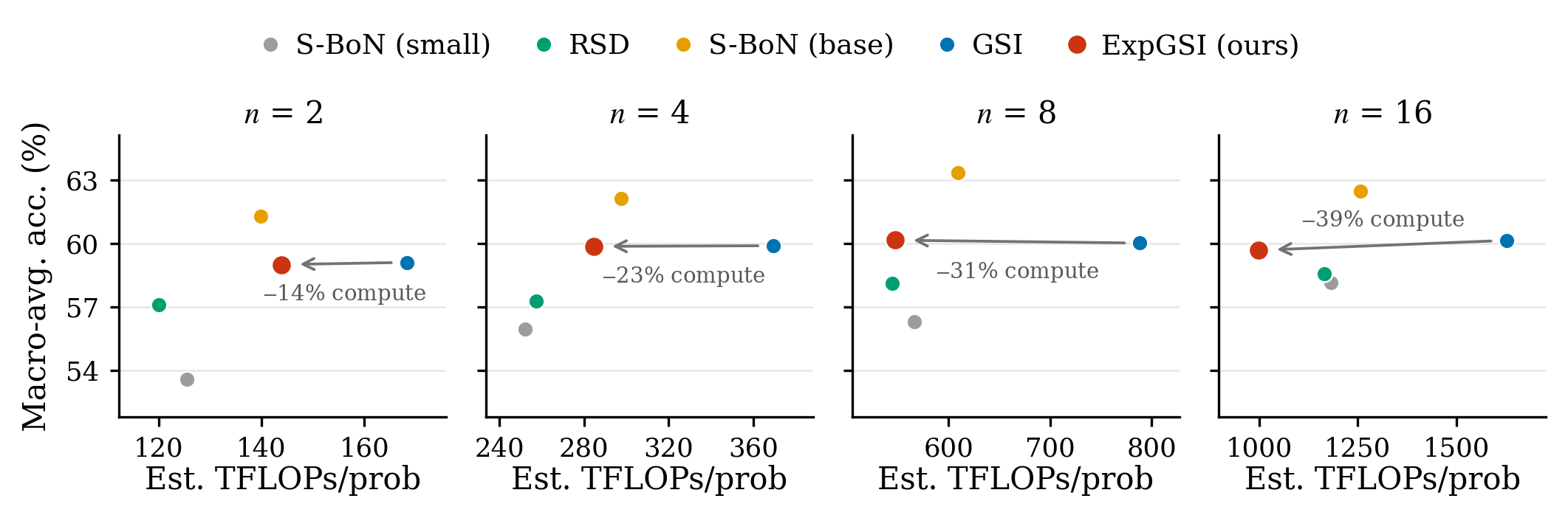}
\caption{Compute--accuracy planes at each candidate budget $n$ (Qwen2.5-Math; macro average over MATH500, MMLU-STEM, and Minerva Math; $3$-seed means). Arrows mark the GSI$\,\to\,$ExpGSI reduction in estimated compute. ExpGSI maintains accuracy comparable to GSI at every $n$ while its compute saving widens monotonically with $n$, from $14\%$ at $n{=}2$ to $39\%$ at $n{=}16$; full metrics for both model families are reported in Table~\ref{tab:model_family_average}.}

\label{fig:latency}
\end{figure*}

\section{ExpBoN with GSI}
\label{subsec:clipped_exp_gsi}

GSI~\citep{geuter2025guided} accelerates test-time alignment with SBoN by using a small draft model $\pi_S$ to propose candidate reasoning steps and a large base model $\pi_B$, together with a likelihood-ratio-corrected reward, to verify and select among them, thereby reducing the cost of generating every candidate autoregressively with $\pi_B$.

In this section, we show that ExpBoN, which admits an exact finite-$n$ decomposition and the resulting desirable properties, is a natural replacement for the SBoN component in the GSI pipeline.
It preserves the same optimal tilted target distribution while offering geometrically faster finite-candidate convergence and an exact, distribution-preserving early-stopping implementation based on truncated rejection sampling.
Our experiments demonstrate substantial computational savings without sacrificing answer accuracy.

We first briefly review the GSI pipeline~\citep{geuter2025guided}. 
At each reasoning state $h$, with draft model $\pi_S(\cdot\mid h)$ and base model $\pi_B(\cdot\mid h)$, the original GSI applies SBoN to a modified score $r(h,y) + d(h,y) / \beta$ where $d(h,y) := \log\frac{\pi_B(y \mid h)}{\pi_S( y \mid h)}$ and $\beta>0$ is a parameter that trades off maximizing the reward versus fidelity to $\pi_B$. 

We then explain our strategy. 
We replace the SBoN component by ExpBoN and consider a \emph{clipped} reward: 
\begin{equation*}
s_C(h,y)
:=
\beta r(h,y)
+
\min\{d(h,y), C\},
\end{equation*}
with a fixed $C > 0$. 
The clipping trick solves the issue that the likelihood-ratio $d(h,y)$ may not have a finite upper bound, and has also been used in \citep{sriraman2026revisiting} for analyzing BoN schemes. 
If $r(h,y)\leq R$, then $s_C(h,y)\leq U_C:=\beta R+C$ provides an envelope for the ExpBoN.

Clipping generally changes the original limiting tilted target, but its effect can be separated from the finite-candidate approximation error and provably bounded by Theorem~\ref{thm:clipped_exp_gsi_bias}, as will be elaborated later. 
We note that the clipping technique can also be employed in the original GSI framework with SBoN; what makes it special here is its natural connection to ExpBoN, whose exact decomposition yields both geometric finite-candidate approximation and a distribution-preserving early-exit implementation, as elaborated below.

Denoting the target tilted distribution by $\pi_{\beta,C}^{\star}(y\mid h)$, 
the ExpBoN exact decomposition property to the clipped score $s_C(h,y)$ gives an output distribution $\widetilde\pi_{n,\beta,C}(\cdot\mid h)$ as 
\begin{equation*} 
\left(
1-\rho_{\beta,C}(h)^n
\right)
\pi_{\beta,C}^{\star}(\cdot\mid h)
+
\rho_{\beta,C}(h)^n
Q_{n,\beta,C}(\cdot\mid h)
\end{equation*}
where $\rho_{\beta,C}(h) := 1- \mathbb E_{\pi_S} \left[ e^{s_C(h,Y)-U_C} \right]$ and $Q_{n,\beta,C}(\cdot\mid h)$ is some residual distribution.
The hit branch has a \emph{rejection-sampling} representation: draw a draft candidate $Y\sim\pi_S(\cdot\mid h)$ and accept it with probability $e^{s_C(h,Y)-U_C}$; conditional on acceptance, $Y$ is an \emph{exact} sample from $\pi_{\beta,C}^{\star}(\cdot\mid h)$. 
Moreover, by exponential memorylessness, conditional on crossing the threshold, the overshoot $s_C(h,Y)+E-U_C$ is $\operatorname{Exp}(1)$ and is independent of $Y$; therefore, selecting the largest overshoot, as full ExpBoN does, and selecting the first accepted proposal in an independent random order induce the same hit-branch law. 
Thus, clipped ExpBoN can be implemented as a truncated rejection sampler: return the first accepted proposal in an independent random order if any of the $n$ proposals is accepted, and otherwise evaluate all candidates and return the exponential-noise maximizer. 
Overall, it is provably an exact finite-$n$ clipped ExpBoN sampler. 
By contrast, under Gumbel noise the analogous fixed-threshold crossing probability $1-\exp\left(-e^{s_i-U}\right)$ is not proportional to $e^{s_i}$, so the same first-crossing construction would not exactly preserve the SBoN sampling \citep{verdun2025soft}.

\begin{algorithm}[tb]
\caption{ExpBoN-based GSI}
\label{alg:clipped_exp_gsi}
\textbf{Input}: State $h$; draft policy $\pi_S$; base policy $\pi_B$;
reward model $r$; candidate count $n$; first-batch size $b$;
inverse temperature $\beta$; clipping level $C$; reward upper bound $R$;
GSI threshold $u$\\
\textbf{Output}: Selected reasoning step
\begin{algorithmic}[1]
\STATE Sample
$y_1,\ldots,y_n\overset{\mathrm{iid}}{\sim}\pi_S(\cdot\mid h)$
and record
$\ell_{S,i}\leftarrow\log\pi_S(y_i\mid h)$
\STATE Sample an independent random permutation
$\sigma$ of $\{1,\ldots,n\}$ and independent
$E_i\sim\operatorname{Exp}(1)$
\STATE $U_C\leftarrow\beta R+C$
\STATE Form batches $\mathcal{B}_1 \gets$ the first $b$ indices of the
permuted order, $\mathcal{B}_2 \gets$ the remaining $n-b$
\FOR{$j=1,2$}
    \STATE Evaluate, in parallel for $i\in\mathcal B_j$,
    \[
        r_i\leftarrow r(h,y_i),
        \qquad
        \ell_{B,i}\leftarrow\log\pi_B(y_i\mid h)
    \]
    \STATE
    $d_i\leftarrow\ell_{B,i}-\ell_{S,i}$ and
    $s_{C,i}\leftarrow\beta r_i+\min\{d_i,C\}$ 
    \STATE
    $\mathcal H_j
    \leftarrow
    \{i\in\mathcal B_j:s_{C,i}+E_i\geq U_C\}$
    \IF{$\mathcal H_j\neq\varnothing$}
        \STATE
        $i^\star
        \leftarrow
        \arg\min_{i\in\mathcal H_j}\sigma^{-1}(i)$
        \STATE \textbf{go to} Line~\ref{line:clipped_exp_gsi_gate}
    \ENDIF
\ENDFOR
\STATE
$i^\star
\leftarrow
\arg\max_{i=1,\ldots,n}\{s_{C,i}+E_i\}$
\STATE \label{line:clipped_exp_gsi_gate}
$\widetilde r_{i^\star}
\leftarrow
r_{i^\star}+d_{i^\star}/\beta$
\IF{$\widetilde r_{i^\star}\geq u$}
    \STATE \textbf{return} $y_{i^\star}$
\ELSE
    \STATE Run the configured base-model GSI fallback
\ENDIF
\end{algorithmic}
\end{algorithm}

Recall the optimal distribution and the output law of clipped ExpBoN are denoted by $\pi_{\beta,C}^{\star}(\cdot \mid h)$ and $\widetilde\pi_{n,\beta,C}(\cdot\mid h)$, respectively. 
We give the following theoretical guarantee: 
\begin{theorem}
\label{thm:clipped_exp_gsi_bias}
For every $n\geq1$,
\begin{equation}
\label{eq:finite_n_plus_clipping_bias}
    \operatorname{TV}
    \left(
        \widetilde\pi_{n,\beta,C}(\cdot\mid h),
        \pi_{\beta,B}^{\star}(\cdot\mid h)
    \right)
    \leq
    \rho_{\beta,C}(h)^n
    +
    \tau_C(h)
\end{equation}
where $\tau_C(h) := \mathbb E_{\pi_{\beta,B}^{\star}(\cdot\mid h)} \left[ 1- \min\left\{1,e^{C-d(h,y)}\right\} \right]$. 
Consequently, for every bounded function $g$,
\begin{align}
\label{eq:finite_n_plus_clipping_utility}
&
\left|
    \mathbb E_{\widetilde\pi_{n,\beta,C}(\cdot\mid h)}
    [g(Y)]
    -
    \mathbb E_{\pi_{\beta,B}^{\star}(\cdot\mid h)}
    [g(Y)]
\right|
\nonumber\\
&\qquad\leq
\big(\sup_y g(y)-\inf_y g(y) \big)
\left(
    \rho_{\beta,C}(h)^n
    +
    \tau_C(h)
\right).
\end{align} 
\end{theorem}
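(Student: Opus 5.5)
The plan is to insert the clipped tilted target $\pi_{\beta,C}^{\star}(\cdot\mid h)$ between the two distributions and use the triangle inequality for total variation:
\begin{equation*}
\operatorname{TV}\big(\widetilde\pi_{n,\beta,C},\pi_{\beta,B}^{\star}\big)
\leq
\operatorname{TV}\big(\widetilde\pi_{n,\beta,C},\pi_{\beta,C}^{\star}\big)
+
\operatorname{TV}\big(\pi_{\beta,C}^{\star},\pi_{\beta,B}^{\star}\big).
\end{equation*}
Throughout I take $\pi_{\beta,B}^{\star}(y\mid h)\propto \pi_S(y\mid h)e^{\beta r(h,y)+d(h,y)}=\pi_B(y\mid h)e^{\beta r(h,y)}$. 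This is the unclipped GSI target. The clipped target is $\pi_{\beta,C}^{\star}(y\mid h)\propto\pi_S(y\mid h)e^{s_C(h,y)}$.

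\textbf{Finite-$n$ term.} Clipped ExpBoN is exactly ExpBoN with reference $P=\pi_S(\cdot\mid h)$, reward $s_C(h,\cdot)$ and temperature $\lambda=1$. The decomposition of Theorem~\ref{thm:ExpBoN_decomposition} (displayed above for $\widetilde\pi_{n,\beta,C}$) then applies. Theorem~\ref{thm:pf_sbon_kl_tv} gives the upper bound $\rho^n$, and it also follows in one line from the decomposition: $\widetilde\pi-\pi^\star=\rho^n(Q-\pi^\star)$, so the TV distance is at most $\rho^n$.

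The one point to check is that the paper's $\rho$ is defined with $r_{\max}$, whereas here $\rho_{\beta,C}(h)$ uses the envelope $U_C\geq\max_y s_C(h,y)$. I will argue that the decomposition holds for any upper envelope $U$ of the score. The coupling event $\{M_n\geq U\}$ has probability $1-(1-\mathbb E[e^{s-U}])^n$, and conditional on it the output is exactly tilted. This is the memorylessness argument: on $\{T_k\geq U\}$ the overshoot is $\operatorname{Exp}(1)$, independent of $X_k$, and $\Pr(T_k\geq U\mid X_k)=e^{s(X_k)-U}$. Rerunning the proof of Theorem~\ref{thm:ExpBoN_decomposition} with $U_C$ in place of $r_{\max}/\lambda$ gives TV at most $\rho_{\beta,C}(h)^n$. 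I expect this substitution to be the only mildly delicate step.

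\textbf{Clipping term.} Set $w(y):=e^{\min\{d,C\}-d}=\min\{1,e^{C-d(h,y)}\}\in(0,1]$. Then $\pi_{\beta,C}^{\star}=\pi_{\beta,B}^{\star}\,w/c$ with $c:=\mathbb E_{\pi_{\beta,B}^{\star}}[w]\leq 1$, because the normalizers satisfy $Z_C=Z_B\,c$. It follows that
\begin{equation*}
\operatorname{TV}=\sum_y\pi_{\beta,B}^{\star}(y)\big(1-w(y)/c\big)_+\leq\sum_y\pi_{\beta,B}^{\star}(y)\big(1-w(y)\big)=1-c=\tau_C(h).
\end{equation*}
The inequality uses $c\leq1$, which gives $w/c\geq w$. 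Combining the two terms yields \eqref{eq:finite_n_plus_clipping_bias}.

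\textbf{Bounded functions.} For \eqref{eq:finite_n_plus_clipping_utility}, I use the standard fact that for any two distributions $\mu,\nu$ and any constant $m$, $|\mathbb E_\mu g-\mathbb E_\nu g|=|\sum_y(\mu-\nu)(y)(g(y)-m)|$. Choosing $m=(\sup g+\inf g)/2$ bounds this by $\tfrac12(\sup g-\inf g)\|\mu-\nu\|_1=(\sup g-\inf g)\operatorname{TV}(\mu,\nu)$. Applying this with the TV bound just proved gives \eqref{eq:finite_n_plus_clipping_utility}.
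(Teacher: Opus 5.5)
Your proposal is correct and follows the paper's proof essentially step for step. Both use the triangle inequality through the clipped target $\pi_{\beta,C}^\star$, the single-candidate hit event with envelope $U_C$ and exponential memorylessness to get $\rho_{\beta,C}(h)^n$, and the weight $w_C=\min\{1,e^{C-d}\}$ with $\pi_{\beta,C}^\star=\pi_{\beta,B}^\star w_C/m_C$ and $m_C\le 1$ to get $\tau_C(h)$. The paper's proof additionally checks that the first-hit early exit of Algorithm~\ref{alg:clipped_exp_gsi} reproduces the same law, but this is not needed for the statement about $\widetilde\pi_{n,\beta,C}$ as posed.
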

Theorem~\ref{thm:clipped_exp_gsi_bias} separates the geometric finite-candidate error $\rho_{\beta,C}(h)^n$ from the nonvanishing clipping bias $\tau_C(h)$. Increasing $n$ reduces only the former, while increasing $C$ is required to reduce the distortion of the original GSI target.

\section{Experiments}
\label{sec:experiments}

\paragraph{Models and benchmarks.}
We evaluate ExpGSI on two draft--target pairs: the math-specialized
Qwen2.5-Math family~\citep{yang2024qwen2}
(\textsc{1.5B-Instruct} draft, \textsc{7B-Instruct} target) and the
general-purpose Qwen3 family~\citep{yang2025qwen3} (\textsc{1.7B}
draft, \textsc{14B} target, thinking mode disabled), both with
\textsc{Qwen2.5-Math-PRM-7B} as the process reward model
($r\in[0,1]$) and both matching the configurations of GSI's
evaluation~\citep{geuter2025guided}. The three benchmarks span
complementary domains, answer formats, and difficulty:
MATH500~\citep{lightman2024let} (in-domain competition math),
MMLU-STEM~\citep{hendrycks2020measuring} (multiple-choice STEM,
outside the PRM's training domain), and Minerva
Math~\citep{lewkowycz2022solving} (harder university-level
problems). We fix 400-problem subsets for the first two and use
Minerva's full 272 problems; identical problems and seeds across
methods and budgets enable paired comparisons.

\paragraph{Baselines.}
We compare against GSI~\citep{geuter2025guided} and the three baselines of its primary
evaluation: S-BoN($\pi_S$) and S-BoN($\pi_B$), stepwise soft
best-of-$n$ over candidates generated entirely by the draft or the
target policy (the cost floor and the quality reference,
respectively), and RSD~\citep{liao2025reward}, which keeps a draft
step if its reward clears a binary threshold and otherwise resamples
from $\pi_B$. Since original RSD is not defined for the shared
$n$-candidate protocol, we adopt the instantiation from GSI's
evaluation. All methods share the same PRM, candidate budgets,
evaluation subsets, and generation settings.

\paragraph{Default settings and metrics.}
All models are served as vLLM~\citep{kwon2023efficient} instances on
NVIDIA A100 GPUs. We report final-answer accuracy, wall-clock time
per reasoning step, acceptance (the fraction of steps resolved with
a draft candidate rather than a target-policy fallback), and
estimated TFLOPs per problem under the FLOPs accounting of
RSD~\citep{liao2025reward}: $2\times$ parameters per processed
token, summed over the forward passes actually executed. All methods
share the decoding hyperparameters of GSI; the only hyperparameter
ExpGSI adds, the clipping level $C$, is calibrated once and held
fixed across all benchmarks, budgets, and both model families. Full
implementation details are given in Appendix \ref{sec:implementation_details}.

\subsection{Main Results}
\label{sec:main_results}

\begin{table*}[t]
\centering
\small
\setlength{\tabcolsep}{4.2pt}
\renewcommand{\arraystretch}{0.95}
\begin{tabular}{@{}c c l c c c c@{}}
\toprule
Model family
& $n$
& Method
& Acc. (\%)
& Time/step (s)
& Accept. (\%)
& Est. TFLOPs/prob. \\
\midrule

\multirow{10}{*}{Qwen2.5-Math(1.5b/7b/7b)}
& \multirow{5}{*}{4}
& S-BoN ($\pi_S$)
& $55.9 \pm 0.5$
& $0.49 \pm 0.01$
& --
& $252 \pm 6$ \\

&
& RSD$^\dagger$
& $57.3 \pm 1.2$
& $0.58 \pm 0.01$
& $95.7 \pm 0.2$
& $257 \pm 7$ \\

&
& S-BoN ($\pi_B$)
& $62.1 \pm 0.4$
& $1.06 \pm 0.01$
& --
& $297 \pm 6$ \\

&
& GSI
& $59.9 \pm 0.6$
& $0.87 \pm 0.01$
& $82.1 \pm 0.2$
& $370 \pm 5$ \\

&
& \textbf{ExpGSI (ours)}
& $59.9 \pm 0.8$
& $\mathbf{0.84 \pm 0.02}$
& $81.9 \pm 0.4$
& $\mathbf{285 \pm 4}$ \\
\cmidrule(lr){2-7}

& \multirow{5}{*}{16}
& S-BoN ($\pi_S$)
& $58.1 \pm 0.2$
& $1.05 \pm 0.01$
& --
& $1182 \pm 16$ \\

&
& RSD$^\dagger$
& $58.5 \pm 1.0$
& $1.17 \pm 0.03$
& $97.3 \pm 0.2$
& $1164 \pm 37$ \\

&
& S-BoN ($\pi_B$)
& $62.5 \pm 0.2$
& $1.84 \pm 0.05$
& --
& $1258 \pm 13$ \\

&
& GSI
& $60.1 \pm 0.5$
& $1.71 \pm 0.03$
& $88.8 \pm 0.3$
& $1627 \pm 23$ \\

&
& \textbf{ExpGSI (ours)}
& $59.7 \pm 0.8$
& $\mathbf{1.40 \pm 0.01}$
& $88.7 \pm 0.4$
& $\mathbf{999 \pm 24}$ \\
\midrule

\multirow{10}{*}{Qwen3
(1.7b/14b/7b)}
& \multirow{5}{*}{4}
& S-BoN ($\pi_S$)
& $54.5 \pm 0.5$
& $0.28 \pm 0.00$
& --
& $521 \pm 1$ \\

&
& RSD$^\dagger$
& $55.8 \pm 0.2$
& $0.31 \pm 0.00$
& $97.7 \pm 0.0$
& $541 \pm 3$ \\

&
& S-BoN ($\pi_B$)
& $66.0 \pm 0.5$
& $0.78 \pm 0.00$
& --
& $642 \pm 8$ \\

&
& GSI
& $61.2 \pm 0.3$
& $0.60 \pm 0.00$
& $89.4 \pm 0.1$
& $1202 \pm 9$ \\

&
& \textbf{ExpGSI (ours)}
& $60.4 \pm 0.0$
& $\mathbf{0.55 \pm 0.00}$
& $89.5 \pm 0.0$
& $\mathbf{891 \pm 2}$ \\
\cmidrule(lr){2-7}

& \multirow{5}{*}{16}
& S-BoN ($\pi_S$)
& $54.8 \pm 0.1$
& $0.71 \pm 0.00$
& --
& $2183 \pm 4$ \\

&
& RSD$^\dagger$
& $56.9 \pm 1.4$
& $0.74 \pm 0.00$
& $98.5 \pm 0.0$
& $2203 \pm 19$ \\

&
& S-BoN ($\pi_B$)
& $65.9 \pm 0.0$
& $1.34 \pm 0.00$
& --
& $2565 \pm 34$ \\

&
& GSI
& $61.0 \pm 0.1$
& $1.50 \pm 0.01$
& $92.7 \pm 0.1$
& $5769 \pm 40$ \\

&
& \textbf{ExpGSI (ours)}
& $61.8 \pm 0.6$
& $\mathbf{1.05 \pm 0.00}$
& $93.0 \pm 0.3$
& $\mathbf{3169 \pm 6}$ \\
\bottomrule
\end{tabular}
\caption{
Macro-average results over MATH500, MMLU-STEM, and Minerva Math and multiple seeds.
ExpGSI maintains accuracy and acceptance comparable to GSI while reducing time and computation. 
At $n=16$ it reduces time and TFLOPs by $18\%$ and $39\%$ for Qwen2.5-Math, and by $30\%$ and $45\%$ for Qwen3. 
% Values are over $3$ seeds for Qwen2.5-Math and $2$ seeds for Qwen3. 
Bold values indicate ExpGSI efficiency improvements over GSI. $^\dagger$RSD follows the GSI implementation and
hyperparameter configuration.
}
\label{tab:model_family_average}
\end{table*}

\paragraph{ExpGSI is consistently cheaper than GSI, and the margin grows with $n$.}
Figure~\ref{fig:latency} summarizes our main result on the
compute--accuracy plane, one panel per candidate budget: at every
budget, ExpGSI reaches GSI's accuracy at strictly lower compute. The
gap widens steadily with the budget: ExpGSI saves $14\%$, $23\%$,
$31\%$, and $39\%$ of GSI's estimated compute at $n=2$, $4$, $8$,
and $16$, with time per step dropping by $18\%$ at $n{=}16$
(Table~\ref{tab:model_family_average}; per-benchmark tables in
Appendix~\ref{sec:Additional experimental results}).  At $n{=}16$ the compounding is strong enough that ExpGSI's cost falls below even RSD ($999$ vs. $1164$ TFLOPs per
problem) and the draft-only S-BoN($\pi_S$) ($1182$), while scoring
$1.2$--$1.6$ points higher than either.

\paragraph{ExpGSI has accuracy matched to GSI.}
The acceleration costs essentially no accuracy. Across the four
operating points in Table~\ref{tab:model_family_average},
macro-average accuracy differs from GSI by at most $0.8$ points,
with no consistent sign: ExpGSI is ahead at Qwen3 $n{=}16$ and equal
or marginally behind elsewhere, all within seed-level variation.
Acceptance rates agree to within $0.3$ points throughout, so the
early exit leaves the accept/fallback behavior of the underlying
sampler essentially untouched. Such lossless acceleration is not a
given: RSD, the existing reward-gated accelerator, buys its speed
with accuracy, scoring $1.2$--$4.9$ macro points below ExpGSI across
the four operating points. Among all baselines, only S-BoN($\pi_B$)
is more accurate than the tilted pair, and it occupies a different
design point: every token is generated by the target policy, at the
highest step latency of all methods ($1.84$\,s vs. ExpGSI's $1.40$\,s at Qwen2.5, $n{=}16$). ExpGSI thus retains the design point of GSI--draft-side generation with target-aware selection--while making it strictly cheaper.

\paragraph{Generalization to Qwen3.}
We repeat the comparison with the general-purpose
Qwen3~\citep{yang2025qwen3} pair at $n\in\{4,16\}$ with two random
seeds, reusing the clipping level as-is--no additional tuning. As
Table~\ref{tab:model_family_average} shows, the picture carries
over: accuracy and acceptance stay at GSI's level, while the
$n{=}16$ compute saving grows from $39\%$ (Qwen2.5) to $45\%$,
consistent with the cost structure of early exit--the larger the
target, the larger the share of GSI's cost spent on the avoided
evaluations. The result also shows that ExpGSI does not rely on
math-specialized policies.

\subsection{Ablation and Robustness Analysis}
\label{sec:analysis}

\paragraph{Robustness to the reward model.}
We replace the 7B math-specialized PRM with the smaller,
general-purpose \textsc{Skywork-o1-Open-PRM-Qwen-2.5-1.5B}%
~\citep{team2024skywork}--the reference PRM of
RSD~\citep{liao2025reward}. The conclusions carry over:
accuracy and acceptance stay at GSI's level, ExpGSI still reduces
the estimated computation of GSI, and it remains ahead of RSD in
accuracy. Complete results are given in
Appendix~\ref{sec:Additional experimental results}.

\paragraph{Sensitivity to the clipping level.}
The clipping level $C$ is the only hyperparameter ExpGSI adds,
trading fidelity to the GSI selection rule against early-exit
efficiency. Sweeping $C$ on MATH500 at $n{=}16$ leaves accuracy
within seed-level noise for $C$ between $0.5\times$ and $1.5\times$
the calibrated value, with a mild drop only at $2\times$, while
estimated computation increases with $C$; $C{=}\infty$ disables
early exit and recovers the full-scan cost. Detailed results are in
Appendix~\ref{sec:Additional experimental results}.

\section{Conclusion and Future Work}
We introduced ExpBoN, an exponential-noise alternative to soft Best-of-$n$ sampling for inference-time alignment.
Its exact finite-$n$ tilted decomposition yields geometric convergence to the target distribution and desirable regret guarantees.
We further integrated ExpBoN into guided speculative inference, resulting in ExpGSI, which substantially accelerates LLM alignment, as validated by experiments.

% \begin{table}[t]
% \centering
% \small
% \caption{Comparison of SBoN and ExpBoN. Both target the same
% exponentially tilted distribution; ExpBoN provides additional
% finite-budget structure.}
% \label{tab:sbon_expbon}
% \setlength{\tabcolsep}{4pt}
% \resizebox{\columnwidth}{!}{
% \begin{tabular}{lll}
% \toprule
% \textbf{Property} & \textbf{SBoN} & \textbf{ExpBoN (ours)} \\
% \midrule
% Perturbation & Gumbel & Exponential \\
% Finite-$n$ structure & implicit output law & exact tilted-mixture
% decomposition \\
% Convergence to tilt & $O(1/n)$ bounds & geometric TV and
% bidirectional KL \\
% Sequential evaluation & no exact early exit & exact early exit via
% first threshold passer \\
% \bottomrule
% \end{tabular}}
% \end{table}

% \section{Conclusion and Future Work}

% We introduced ExpBoN, an exponential-noise alternative to soft Best-of-$n$ sampling for inference-time alignment.
% Its exact finite-$n$ tilted decomposition yields geometric convergence to the target distribution and desirable regret guarantees.
% We further integrated ExpBoN into guided speculative inference, resulting in ExpGSI, which substantially accelerates LLM alignment, as validated by experiments.

We discuss several future directions.
First, the connection between ExpBoN and Permute-and-Flip~\citep{zhao2024permute} opens a natural path toward jointly performing alignment and watermarking, whose tradeoff has recently been investigated by~\citet{verma2026watermarking}.
Second, under proxy-reward misspecification, faster convergence may approach an overoptimized target more rapidly, and it is therefore useful to combine ExpBoN with reward-hacking mitigation methods~\citep{khalaf2026inference}.
Finally, SBoN has recently been employed in diffusion language models~\citep{bu2026dprm}, and it is of interest to investigate the use of ExpBoN in this setting as well.

\newpage 
\bibliography{ref}
\newpage

\section{Appendices}

\subsection{Proofs for Section~\ref{subsec::decomp_converge}}

We define
\begin{equation*}
    s(x):=\frac{r(x)}{\lambda},
    \qquad
    s_{\max}:=\frac{r_{\max}}{\lambda},
\end{equation*}
and, for $t\in\mathbb R$, 
\begin{equation*}
    B_t
    :=
    \left\{
        x\in\mathcal X_P:s(x)\leq t
    \right\}.
\end{equation*}

\begin{lemma}
\label{lem:ExpBoN_conditional_tilt}
For almost every $t$ with respect to the distribution of $M_n$, and every
$x\in\mathcal X_P$,
\begin{equation*}
    \Pr(Y=x\mid M_n=t)
    =
    \frac{
        P(x)e^{s(x)}\mathbf 1\{x\in B_t\}
    }{
        \sum_{z\in\mathcal X_P}
        P(z)e^{s(z)}\mathbf 1\{z\in B_t\}
    }.
\end{equation*}
Thus, conditional on $M_n=t$, the output follows $P_\lambda^\star$ restricted to $B_t$ and renormalized.  In particular, for almost every $t\geq s_{\max}$,
\begin{equation*}
    \Pr(Y=x\mid M_n=t)
    =
    P_\lambda^\star(x).
\end{equation*}
\end{lemma}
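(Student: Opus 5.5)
We compute the joint law of $(Y,M_n)$ directly by conditioning on which index attains the maximum and what value it takes. Each $T_k=s(X_k)+E_k$ is i.i.d., so first we find the joint "density" of $(X_k,T_k)$. For $x\in\mathcal X_P$ and $t\in\mathbb R$,
\begin{equation*}
    \Pr(X_k=x,\,T_k\in dt)=P(x)\,e^{-(t-s(x))}\mathbf 1\{t\geq s(x)\}\,dt
    =e^{-t}\,P(x)e^{s(x)}\mathbf 1\{x\in B_t\}\,dt .
\end{equation*}
The key point is that the factor $e^{-t}$ separates from the factor depending on $x$, which is the memoryless structure of the exponential. We write $F(t):=\Pr(T_1\leq t)$ for the (continuous) CDF of $T_1$. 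Ties among the $T_k$ occur with probability zero, because each $T_k$ has a density.

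Next we compute the joint law of $(Y,M_n)$. Since ties are null, the events $\{K=k\}$ are a.s.\ disjoint. On $\{K=k\}$ we have $Y=X_k$ and $M_n=T_k$, and every other $T_j$ is $\leq T_k$. Using independence across $k$ and symmetry,
\begin{align*}
    \Pr(Y=x,\,M_n\in dt)
    &=\sum_{k=1}^n \Pr\big(X_k=x,\,T_k\in dt,\,T_j\leq t\ \forall j\neq k\big)\\
    &= n\,F(t)^{n-1}e^{-t}\,P(x)e^{s(x)}\mathbf 1\{x\in B_t\}\,dt .
\end{align*}
Summing over $x\in\mathcal X_P$ gives the density of $M_n$:
\begin{equation*}
    f_{M_n}(t)=nF(t)^{n-1}e^{-t}\sum_{z\in\mathcal X_P}P(z)e^{s(z)}\mathbf 1\{z\in B_t\}.
\end{equation*}
We then divide the joint density by $f_{M_n}(t)$ wherever the latter is positive. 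The factor $nF(t)^{n-1}e^{-t}$ cancels, and this yields exactly the claimed conditional law. The set where $f_{M_n}=0$ has $M_n$-probability zero, which justifies the phrase "almost every $t$". Formally, the conditional distribution is a regular version obtained as a Radon--Nikodym derivative of $\Pr(Y=x,M_n\in\cdot)$ with respect to the law of $M_n$. Both measures are absolutely continuous with respect to Lebesgue measure with the densities above, so the ratio is a valid version.

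For the final claim, take $t\geq s_{\max}$. Then $B_t=\mathcal X_P$, the indicator is identically one, and the denominator equals $\mathbb E_P[e^{r(X)/\lambda}]$. The ratio is therefore $P_\lambda^\star(x)$.

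The main obstacle is the measure-theoretic care in the middle step. We must justify that conditioning on the maximum reduces to the "which index wins" decomposition without double counting, which relies on ties being null. We must also ensure that the conditional probability is stated as an a.e.\ version rather than pointwise. Everything else is the factorization $e^{-(t-s(x))}=e^{-t}e^{s(x)}$.
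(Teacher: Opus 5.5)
Your proposal is correct and follows essentially the same route as the paper: factor the joint density of $(X_k,T_k)$ as $e^{-t}P(x)e^{s(x)}\mathbf 1\{x\in B_t\}$, then use a.s.\ uniqueness of the maximizer and exchangeability to get $\Pr(Y=x,M_n\in dt)=nF(t)^{n-1}e^{-t}P(x)e^{s(x)}\mathbf 1\{x\in B_t\}\,dt$, and finally normalize over $x$. Your Radon--Nikodym justification of the ``almost every $t$'' qualifier, which divides only where $f_{M_n}(t)>0$, is simply a more careful spelling-out of the paper's one-line normalization step.
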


\begin{proof}
Let $T=s(X)+E$ and let $F_T$ denote its distribution function.  Since
$E\sim\operatorname{Exp}(1)$, for every $x\in\mathcal X_P$,
\begin{align*}
    \Pr(X=x,T\in dt)
    &=
    P(x)e^{-(t-s(x))}
    \mathbf 1\{t\geq s(x)\}\,dt\\
    &=
    e^{-t}P(x)e^{s(x)}
    \mathbf 1\{t\geq s(x)\}\,dt.
\end{align*}
The noisy scores are independent and continuously distributed, so their maximum is attained at a unique index almost surely.  By exchangeability,
\begin{align*}
    \Pr(Y=x,M_n\in dt)
    ={}&
    n e^{-t}P(x)e^{s(x)}
    \mathbf 1\{t\geq s(x)\}\\
    &\times F_T(t)^{n-1}\,dt.
\end{align*}
For fixed $t$, all factors except
$P(x)e^{s(x)}\mathbf 1\{s(x)\leq t\}$ are independent of $x$.
Normalizing over $x\in\mathcal X_P$ proves the result.
\end{proof}

\subsubsection{Proof of Theorem~\ref{thm:ExpBoN_decomposition}}

\begin{proof}
For one noisy score,
\begin{align*}
    \Pr(T<s_{\max})
    &=
    \mathbb E_P
    \left[
        1-e^{-(s_{\max}-s(X))}
    \right]\\
    &=
    1-
    \mathbb E_P
    \left[
        e^{s(X)-s_{\max}}
    \right]\\
    &=
    \rho_\lambda.
\end{align*}
Since the $T_i$ are independent,
\begin{equation*}
    \Pr(M_n<s_{\max})
    =
    \rho_\lambda^n.
\end{equation*}
If $\rho_\lambda=0$, then $r(X)=r_{\max}$ $P$-almost surely, so $\widetilde P_{n,\lambda}=P_\lambda^\star=P$, and the claimed decomposition holds for any distribution $Q_{n,\lambda}$ on $\mathcal X_P$.  Hence, for the remainder of the proof, suppose that $\rho_\lambda>0$. 
By Lemma~\ref{lem:ExpBoN_conditional_tilt}, conditional on $M_n\geq s_{\max}$, the output follows $P_\lambda^\star$.  Define
\begin{equation*}
    Q_{n,\lambda}(A)
    :=
    \Pr(Y\in A\mid M_n<s_{\max}),
    \qquad
    A\subseteq\mathcal X_P.
\end{equation*}
Conditioning on $\{M_n\geq s_{\max}\}$ and
$\{M_n<s_{\max}\}$ gives
\begin{equation*}
    \widetilde P_{n,\lambda}
    = \left(
        1-\rho_\lambda^n
    \right)
    P_\lambda^\star +
    \rho_\lambda^n Q_{n,\lambda}.
\end{equation*}

We also record the property needed for the lower bounds below.  
Let
\begin{equation*}
    A_\lambda
    :=
    \{x\in\mathcal X_P:r(x)=r_{\max}\}.
\end{equation*}
If $X_i\in A_\lambda$, then
\begin{equation*}
    T_i=s_{\max}+E_i\geq s_{\max}
\end{equation*}
almost surely.  Hence, on $\{M_n<s_{\max}\}$, none of the candidates, and
therefore not the selected output, belongs to $A_\lambda$.  Thus,
\begin{equation*}
    Q_{n,\lambda}(A_\lambda)=0.
\end{equation*}
\end{proof}

\subsubsection{Proof of Theorem~\ref{thm:pf_sbon_kl_tv}}

\begin{proof}
Since $r$ is nonconstant on $\mathcal X_P$,
$\rho_\lambda\in(0,1)$, while $q_\lambda>0$ and
$p_{\min,\lambda}>0$. 
Write
\begin{equation*}
    a:=\rho_\lambda^n,
    \qquad
    P^\star:=P_\lambda^\star,
    \qquad
    Q:=Q_{n,\lambda}.
\end{equation*}
Theorem~\ref{thm:ExpBoN_decomposition} gives
\begin{equation*}
    \widetilde P_{n,\lambda}
    =
    (1-a)P^\star+aQ.
\end{equation*}
Therefore,
\begin{equation*}
    \mathrm{TV}
    \big(
        P^\star,
        \widetilde P_{n,\lambda}
    \big)
    =
    a\,\mathrm{TV}(P^\star,Q)
    \leq
    a.
\end{equation*}
Moreover, since $Q(A_\lambda)=0$,
\begin{align*}
&P^\star(A_\lambda)
-
\widetilde P_{n,\lambda}(A_\lambda)\\
&\qquad=
q_\lambda
-
(1-a)q_\lambda
=
q_\lambda a.
\end{align*}
Taking the event $A_\lambda$ in the variational definition of total
variation yields
\begin{equation*}
    \mathrm{TV}
    \big(
        P^\star,
        \widetilde P_{n,\lambda}
    \big)
    \geq
    q_\lambda a.
\end{equation*}
Pinsker's inequality in either direction then gives
\begin{align*}
    D_{\mathrm{KL}}
    \big(
        P^\star
        \Vert
        \widetilde P_{n,\lambda}
    \big)
    &\geq
    2q_\lambda^2a^2,\\
    D_{\mathrm{KL}}
    \big(
        \widetilde P_{n,\lambda}
        \Vert
        P^\star
    \big)
    &\geq
    2q_\lambda^2a^2.
\end{align*}

For the first upper bound in the target-to-sampler direction,
\begin{equation*}
    \widetilde P_{n,\lambda}(x)
    \geq
    (1-a)P^\star(x),
\end{equation*}
and hence
\begin{equation*}
    D_{\mathrm{KL}}
    \big(
        P^\star
        \Vert
        \widetilde P_{n,\lambda}
    \big)
    \leq
    -\log(1-a).
\end{equation*}
For the sharper finite-alphabet bounds, define
\begin{equation*}
    \chi^2(Q_1\Vert Q_2)
    :=
    \sum_x
    \frac{
        (Q_1(x)-Q_2(x))^2
    }{
        Q_2(x)
    }.
\end{equation*}
Using $D_{\mathrm{KL}}(Q_1\Vert Q_2)\leq
\chi^2(Q_1\Vert Q_2)$,
\begin{align*}
&D_{\mathrm{KL}}
\big(
    P^\star
    \Vert
    \widetilde P_{n,\lambda}
\big)\\
&\quad\leq
\chi^2
\big(
    P^\star
    \Vert
    \widetilde P_{n,\lambda}
\big)\\
&\quad=
\sum_x
\frac{
    a^2(P^\star(x)-Q(x))^2
}{
    (1-a)P^\star(x)+aQ(x)
}\\
&\quad\leq
\frac{a^2}{1-a}
\chi^2(Q\Vert P^\star).
\end{align*}
Furthermore,
\begin{align*}
    \chi^2(Q\Vert P^\star)
    &=
    \sum_x
    \frac{Q(x)^2}{P^\star(x)}-1\\
    &\leq
    p_{\min,\lambda}^{-1}-1.
\end{align*}
This proves the second target-to-sampler KL upper bound.

Finally,
\begin{equation*}
    \chi^2
    \big(
        \widetilde P_{n,\lambda}
        \Vert
        P^\star
    \big)
    =
    a^2\chi^2(Q\Vert P^\star).
\end{equation*}
The inequality
$D_{\mathrm{KL}}(Q_1\Vert Q_2)
\leq\log(1+\chi^2(Q_1\Vert Q_2))$ therefore gives
\begin{equation*}
    D_{\mathrm{KL}}
    \big(
        \widetilde P_{n,\lambda}
        \Vert
        P^\star
    \big)
    \leq
    \log\left(
        1+
        \left(
            p_{\min,\lambda}^{-1}-1
        \right)
        a^2
    \right).
\end{equation*}
Substituting $a=\rho_\lambda^n$ proves all finite-$n$ bounds.  Since the
constants $q_\lambda$ and $p_{\min,\lambda}$ are fixed and positive, the
stated asymptotic orders follow.
\end{proof}

\subsection{Proofs for Section~\ref{subsec:ExpBoN_reward_convergence}}

\subsubsection{Proof of Theorem~\ref{thm:ExpBoN_reward_objective}}

\begin{proof}
By Lemma~\ref{lem:ExpBoN_conditional_tilt}, conditional on $M_n=t$, the selected output follows $P_\lambda^\star$ restricted to $\{x:s(x)\leq t\}$.  As $t$ increases, only points with larger reward are added to the conditioning set.  Hence the conditional expected reward is nondecreasing in $t$ and is at most $\mathbb E_{P_\lambda^\star}[r(X)]$. Under the natural coupling $M_{n+1}=\max\{M_n,T_{n+1}\}\geq M_n$ almost surely.  Since the preceding conditional expected reward is a nondecreasing function of $t$, $\mathbb E_{\widetilde P_{n,\lambda}}[r(X)]$ is nondecreasing in $n$ and is upper bounded by the tilted-target reward.

By Theorem~\ref{thm:ExpBoN_decomposition},
\begin{align*}
&\mathbb E_{P_\lambda^\star}[r(X)]
-
\mathbb E_{\widetilde P_{n,\lambda}}[r(X)]\\
&\qquad=
\rho_\lambda^n
\left(
    \mathbb E_{P_\lambda^\star}[r(X)]
    -
    \mathbb E_{Q_{n,\lambda}}[r(X)]
\right).
\end{align*}
The expression in parentheses lies in $[0,\Delta_r]$, proving the displayed bound.  
Since $r$ is not constant, $\rho_\lambda\in(0,1)$, and convergence follows.
\end{proof}

\subsubsection{Proof of Corollary~\ref{cor:ExpBoN_relative_reward}}

\begin{proof}
Since $r$ is nonconstant and $P_\lambda^\star$ has full support on
$\mathcal X_P$,
\begin{equation*}
    \mathbb E_{P_\lambda^\star}[\overline r(X)]>0.
\end{equation*}
The exact decomposition gives
\begin{align*}
&\mathbb E_{P_\lambda^\star}[\overline r(X)]
-
\mathbb E_{\widetilde P_{n,\lambda}}[\overline r(X)]\\
&\qquad=
\rho_\lambda^n
\left(
    \mathbb E_{P_\lambda^\star}[\overline r(X)]
    -
    \mathbb E_{Q_{n,\lambda}}[\overline r(X)]
\right).
\end{align*}
The left-hand side is nonnegative by
Theorem~\ref{thm:ExpBoN_reward_objective}.  Since $\overline r\geq0$,
\begin{equation*}
    \mathbb E_{P_\lambda^\star}[\overline r(X)]
    -
    \mathbb E_{Q_{n,\lambda}}[\overline r(X)]
    \leq
    \mathbb E_{P_\lambda^\star}[\overline r(X)].
\end{equation*}
Dividing by the positive denominator proves the result.
\end{proof}

\subsubsection{Proof of Corollary~\ref{cor:ExpBoN_reference_kl_localization}}

\begin{proof}
For every distribution $Q$ on $\mathcal X_P$,
\begin{align*}
&D_{\mathrm{KL}}(Q\Vert P)
-
D_{\mathrm{KL}}(P_\lambda^\star\Vert P)\\
&\qquad=
D_{\mathrm{KL}}(Q\Vert P_\lambda^\star)
+
\frac{
    \mathbb E_Q[r(X)]
    -
    \mathbb E_{P_\lambda^\star}[r(X)]
}{\lambda}.
\end{align*}
Setting $Q=\widetilde P_{n,\lambda}$, using the nonnegativity of KL and
Theorem~\ref{thm:ExpBoN_reward_objective}, gives
\begin{align*}
&D_{\mathrm{KL}}
\big(
    \widetilde P_{n,\lambda}
    \Vert
    P
\big)
-
D_{\mathrm{KL}}
\big(
    P_\lambda^\star
    \Vert
    P
\big)\\
&\qquad\geq
-\frac{\Delta_r}{\lambda}\rho_\lambda^n.
\end{align*}
The same reward theorem also gives
\begin{equation*}
    \mathbb E_{\widetilde P_{n,\lambda}}[r(X)]
    -
    \mathbb E_{P_\lambda^\star}[r(X)]
    \leq
    0.
\end{equation*}
Therefore, Theorem~\ref{thm:pf_sbon_kl_tv} yields
\begin{align*}
&D_{\mathrm{KL}}
\big(
    \widetilde P_{n,\lambda}
    \Vert
    P
\big)
-
D_{\mathrm{KL}}
\big(
    P_\lambda^\star
    \Vert
    P
\big)\\
&\qquad\leq
D_{\mathrm{KL}}
\big(
    \widetilde P_{n,\lambda}
    \Vert
    P_\lambda^\star
\big)\\
&\qquad\leq
\log\left(
    1+
    \left(
        p_{\min,\lambda}^{-1}-1
    \right)
    \rho_\lambda^{2n}
\right),
\end{align*}
which proves the localization bound.
\end{proof}

\subsubsection{Proof of the distribution-free reference-KL bound}

\begin{proof}
Let
\begin{equation*}
    a_\lambda:=\frac{\Delta_r}{\lambda},
    \qquad
    Z(x):=
    \frac{
        \widetilde P_{n,\lambda}(x)
    }{
        P(x)
    }.
\end{equation*}
By exchangeability,
\begin{equation*}
    Z(x)
    =
    n\Pr\left(
        s(x)+E_1
        \geq
        \max_{j=2,\ldots,n}
        \{s(X_j)+E_j\}
    \right).
\end{equation*}
The winning probability of the fixed candidate is maximized by assigning it
reward $r_{\max}$ and assigning every competitor reward $r_{\min}$.  Hence
\begin{align*}
    Z(x)
    &\leq
    n\int_0^\infty
    e^{-e}
    \left(
        1-e^{-a_\lambda-e}
    \right)^{n-1}\,de\\
    &=
    e^{a_\lambda}
    \left[
        1-
        \left(
            1-e^{-a_\lambda}
        \right)^n
    \right]
    =:
    u_{n,\lambda}.
\end{align*}
Similarly, the winning probability is minimized by assigning the fixed candidate reward $r_{\min}$ and every competitor reward $r_{\max}$, which gives
\begin{align*}
    Z(x)
    &\geq
    n\int_{a_\lambda}^\infty
    e^{-e}
    \left(
        1-e^{-(e-a_\lambda)}
    \right)^{n-1}\,de\\
    &=
    e^{-a_\lambda}.
\end{align*}
Moreover, $\mathbb E_P[Z(X)]=1$, $u_{n,\lambda}\leq n$, and $u_{n,\lambda}\leq e^{a_\lambda}$.
Therefore,
\begin{equation*}
    D_{\mathrm{KL}}
    \big(
        \widetilde P_{n,\lambda}
        \Vert
        P
    \big)
    =
    \mathbb E_P[Z(X)\log Z(X)]
    \leq
    \log u_{n,\lambda}
    \leq
    \log n.
\end{equation*}
Also, $Z(X)\in[e^{-a_\lambda},e^{a_\lambda}]$.  Applying the chord bound for
the convex function $z\mapsto z\log z$ under the constraint
$\mathbb E_P[Z(X)]=1$ gives
\begin{equation*}
    \mathbb E_P[Z(X)\log Z(X)]
    \leq
    a_\lambda
    \tanh\left(
        \frac{a_\lambda}{2}
    \right)
    \leq
    \frac{a_\lambda^2}{2}.
\end{equation*}
Combining the two bounds proves
\begin{equation*}
    D_{\mathrm{KL}}
    \big(
        \widetilde P_{n,\lambda}
        \Vert
        P
    \big)
    \leq
    \min\left\{
        \log n,
        \frac{\Delta_r^2}{2\lambda^2}
    \right\}.
\end{equation*}
\end{proof}

\subsection{Proof of Theorem~\ref{thm:ExpBoN_utility_dominance}}
\begin{proof}
Conditional on $X_{1:n}=x_{1:n}$, ExpBoN is report-noisy-max with independent exponential noise and is therefore distributionally equivalent to Permute-and-Flip~\citep{ding2021permute}.  Conditional SBoN is softmax sampling applied to the same candidate reward vector at the same temperature $\lambda$. The ``never worse'' property of Permute-and-Flip \citep[Theorem~3.1(3)]{zhao2024permute}, equivalently \citet[Theorem~2]{mckenna2020permute}, therefore gives the conditional inequality. Averaging over $X_{1:n}$ gives the marginal result.
\end{proof}

\subsection{Proofs for Section~\ref{subsec:regret_analysis}}
\label{subsec:appendix_regret_analysis}

We first record the mechanism-level stability result used below.

\begin{lemma}
\label{lem:exp_noise_stability}
For deterministic score vectors $u,v\in\mathbb R^n$, let $q_u$ and $q_v$
denote the distributions of
\begin{equation*}
    \arg\max_{i=1,\ldots,n}\{u_i+E_i\}
    \quad\text{and}\quad
    \arg\max_{i=1,\ldots,n}\{v_i+E_i\},
\end{equation*}
respectively, where the $E_i$ are i.i.d.
$\operatorname{Exp}(1)$.  Define
\begin{equation*}
    D
    :=
    \max_i(v_i-u_i)
    -
    \min_i(v_i-u_i).
\end{equation*}
Then, for every $i$,
\begin{equation*}
    e^{-D}q_u(i)
    \leq
    q_v(i)
    \leq
    e^Dq_u(i).
\end{equation*}
Consequently, in either KL direction,
\begin{equation*}
    D_{\mathrm{KL}}(q_u\Vert q_v)
    \leq
    D\tanh\left(\frac D2\right),
\end{equation*}
and
\begin{equation*}
    \mathrm{TV}(q_u,q_v)
    \leq
    \tanh\left(\frac D2\right).
\end{equation*}
\end{lemma}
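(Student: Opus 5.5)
The plan has three steps: write the selection probability as a one-dimensional integral, get the multiplicative bound by shifting the integration variable, and then convert that likelihood-ratio bound into KL and TV bounds with a chord argument.

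\emph{Step 1 (integral representation).} Fix an index $i$. Condition on $E_i=e$ and use independence of the noises:
\begin{equation*}
    q_u(i)=\int_0^\infty e^{-e}\prod_{j\neq i}\Pr\left(u_j+E_j< u_i+e\right)de
    =\int_0^\infty e^{-e}\prod_{j\neq i}\left(1-e^{-(u_i+e-u_j)_+}\right)de .
\end{equation*}
Substitute $t=u_i+e$. This gives $q_u(i)=e^{u_i}\int_{u_i}^\infty e^{-t}\prod_{j\neq i}F(t-u_j)\,dt$, where $F(y)=(1-e^{-y})_+$ is the Exp$(1)$ distribution function. Both $F$ and the integrand are monotone in the relevant arguments.

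\emph{Step 2 (multiplicative bound).} Write $\delta=v-u$, $m=\min_j\delta_j$ and $M=\max_j\delta_j$, so that $D=M-m$. Translating all scores by a common constant does not change the argmax law, so we may replace $v$ by $v-m$ and assume $0\le\delta_j\le D$.

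Increasing the $j$-th score for $j\neq i$ can only lower $q(i)$, and increasing $u_i$ can only raise it. Therefore $q_v(i)$ is sandwiched between two extreme cases.

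\begin{itemize}
\item \emph{Upper bound.} Raise $u_i$ by $D$ and leave the competitors at $u_j$; this dominates $q_v(i)$. In the representation of Step 1, replace $u_i$ by $u_i+D$. The prefactor becomes $e^{u_i+D}$. The integral $\int_{u_i+D}^\infty$ has a smaller range than $\int_{u_i}^\infty$ and a nonnegative integrand, so it is at most the original integral. Hence $q_v(i)\le e^D q_u(i)$.
\item \emph{Lower bound.} Shift every competitor up by $D$ and keep $u_i$ fixed. Since the argmax is translation invariant, this equals the law obtained by lowering $u_i$ by $D$. Now the prefactor is $e^{u_i-D}$ and the range $\int_{u_i-D}^\infty$ grows, so the integral is at least the original. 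Hence $q_v(i)\ge e^{-D}q_u(i)$.
\end{itemize}

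One must check that these monotonicity comparisons are legitimate, i.e.\ that the one-sided changes really bound the actual $v$ on both sides. They are, because $0\le\delta_j\le D$ for every $j$, including $j=i$.

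\emph{Step 3 (KL and TV).} Let $Z=q_u/q_v$. Step 2 gives $Z\in[e^{-D},e^D]$, and $\mathbb E_{q_v}[Z]=1$. For the KL bound, $D_{\mathrm{KL}}(q_u\Vert q_v)=\mathbb E_{q_v}[Z\log Z]$. Since $z\log z$ is convex, its expectation under a mean constraint on an interval is maximized by the two-point law on the endpoints $e^{\pm D}$ with mean $1$. Evaluating at that law yields $D\tanh(D/2)$; this is the same chord bound already used in the distribution-free reference-KL proof. The other KL direction follows by symmetry.

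For TV, $\mathrm{TV}=\mathbb E_{q_v}[(Z-1)_+]$. This is also a convex functional of $Z$, so it is maximized by the same two-point law. At that law the value is $p(e^D-1)$ with $p=(1-e^{-D})/(e^D-e^{-D})$, which simplifies to $\tanh(D/2)$.

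I expect Step 2 to be the main obstacle: making the integral shift argument rigorous with the positive-part truncation in $F$. The TV constant in Step 3 needs only routine algebra.
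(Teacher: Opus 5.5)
Your proposal is correct and follows essentially the same route as the paper. Both arguments sandwich $q_v(i)$ between $q_{u-De_i}(i)$ and $q_{u+De_i}(i)$ by monotonicity (you normalize so that $0\le v_j-u_j\le D$, the paper so that $v_i=u_i$), shift the lower limit of the one-dimensional integral to get the factors $e^{\pm D}$, and finish with the chord bound for a mean-one likelihood ratio in $[e^{-D},e^{D}]$. The positive-part truncation you flag is not an obstacle, since the integral representation holds for every score vector and the shift argument uses only nonnegativity of the integrand.
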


\begin{proof}
Fix $i$ and subtract the common constant $v_i-u_i$ from all coordinates of
$v$, which does not change $q_v$.  The $i$th score then equals $u_i$, while
each competitor score differs from the corresponding score under $u$ by a
quantity in $[-D,D]$.  Monotonicity of the winning probability in each
competitor score gives
\begin{equation*}
    q_{u-De_i}(i)
    \leq
    q_v(i)
    \leq
    q_{u+De_i}(i),
\end{equation*}
where $e_i$ is the $i$th standard basis vector.

Let $F_E$ be the distribution function of an
$\operatorname{Exp}(1)$ variable.  Then
\begin{align*}
    q_{u+De_i}(i)
    &=
    \int_0^\infty
    e^{-e}
    \prod_{j\neq i}
    F_E(e+D+u_i-u_j)\,de\\
    &=
    e^D
    \int_D^\infty
    e^{-t}
    \prod_{j\neq i}
    F_E(t+u_i-u_j)\,dt\\
    &\leq
    e^Dq_u(i).
\end{align*}
Similarly,
\begin{align*}
    q_{u-De_i}(i)
    &=
    e^{-D}
    \int_{-D}^\infty
    e^{-t}
    \prod_{j\neq i}
    F_E(t+u_i-u_j)\,dt\\
    &\geq
    e^{-D}q_u(i).
\end{align*}
This proves the likelihood-ratio bound.  The KL and total-variation bounds
follow by applying the chord bounds to a likelihood ratio constrained to
$[e^{-D},e^D]$ and having mean one.
\end{proof}

\subsubsection{Proof of Theorem~\ref{thm:ExpBoN_proxy_stability}}

\begin{proof}
Condition on the candidate tuple $X_1,\ldots,X_n$ and apply
Lemma~\ref{lem:exp_noise_stability} to
\begin{equation*}
    u_i=\frac{r_p(X_i)}{\lambda},
    \qquad
    v_i=\frac{r_t(X_i)}{\lambda}.
\end{equation*}
The conditional score perturbation has oscillation at most
$\omega_d/\lambda$, giving the range-based conditional KL and
TV bounds.
Since the candidate marginal is $P^n$ under both mechanisms, the KL chain rule gives \begin{equation*} D_{\mathrm{KL}}(P^n q_t\Vert P^n q_p) = \mathbb E_{P^n} \left[ D_{\mathrm{KL}}\big(q_t(\cdot\mid X_{1:n}) \Vert q_p(\cdot\mid X_{1:n})\big) \right], \end{equation*} and the analogous total-variation distance is the $P^n$-average of the conditional total-variation distances. 
The returned response is a deterministic function of this joint
object, so data processing gives the range-based output-policy bounds.

For the variance-sensitive bound, Lemma~\ref{lem:exp_noise_stability} and
$\tanh(z)\leq z$ give the conditional KL bound
\begin{equation*}
    \frac{1}{2\lambda^2}
    \left(
        \max_i d(X_i)-\min_i d(X_i)
    \right)^2.
\end{equation*}
Writing $d_i=d(X_i)$ and
$\overline d_n=n^{-1}\sum_{i=1}^n d_i$,
\begin{equation*}
    \left(
        \max_i d_i-\min_i d_i
    \right)^2
    \leq
    2\sum_{i=1}^n
    (d_i-\overline d_n)^2.
\end{equation*}
Taking expectations and using
\begin{equation*}
    \mathbb E
    \left[
        \sum_{i=1}^n
        (d_i-\overline d_n)^2
    \right]
    =
    (n-1)\operatorname{Var}_P(d(X))
\end{equation*}
proves the variance-sensitive KL bound in both directions.  Pinsker's
inequality gives the corresponding TV bound.
\end{proof}

\subsubsection{Proof of Theorem~\ref{thm:ExpBoN_direct_regret}}

\begin{proof}
The log-likelihood ratio between the true- and proxy-tilted targets satisfies
\begin{equation*}
    \log
    \frac{
        P_{\lambda,t}^\star(x)
    }{
        P_{\lambda,p}^\star(x)
    }
    =
    \frac{d(x)}{\lambda}
    +
    \textnormal{constant},
\end{equation*}
and therefore has oscillation $\omega_d/\lambda$.  
For completeness, if a likelihood ratio $L=dP/dQ$ satisfies $\operatorname{osc}(\log L)\leq w$, then the chord bound for $L$ under $\mathbb E_Q[L]=1$, optimized over its possible endpoints, gives $\mathrm{TV}(P,Q)\leq\tanh(w/4)$.
The sharp
likelihood-ratio range bound gives
\begin{equation*}
    \mathrm{TV}
    \big(
        P_{\lambda,t}^\star,
        P_{\lambda,p}^\star
    \big)
    \leq
    \tanh\left(
        \frac{\omega_d}{4\lambda}
    \right).
\end{equation*}
The exact ExpBoN decomposition gives
\begin{equation*}
    \mathrm{TV}
    \big(
        P_{\lambda,p}^\star,
        \widetilde P_{n,\lambda}^{p}
    \big)
    \leq
    \rho_{\lambda,p}^{\,n}.
\end{equation*}
Therefore,
\begin{align*}
    \operatorname{Reg}_{n,\lambda}
    ={}&
    B_t(\lambda)
    +
    \mathbb E_{P_{\lambda,t}^\star}[r_t(X)]
    -
    \mathbb E_{\widetilde P_{n,\lambda}^{p}}[r_t(X)]\\
    \leq{}&
    B_t(\lambda)
    +
    \Delta_t
    \mathrm{TV}
    \big(
        P_{\lambda,t}^\star,
        \widetilde P_{n,\lambda}^{p}
    \big),
\end{align*}
and the triangle inequality proves the claim.
\end{proof}

\subsubsection{Proof of Theorem~\ref{thm:ExpBoN_coverage_regret}}

\begin{proof}
For $u\in\{p,t\}$, write
\begin{equation*}
    D_u
    :=
    D_{\mathrm{KL}}
    \big(
        P_{\lambda,u}^\star
        \Vert
        P
    \big),
\end{equation*}
and define
\begin{equation*}
    A_u
    :=
    \left\{
        x\in\mathcal X_P:r_u(x)=r_{u,\max}
    \right\},
    \qquad
    C_{\infty,u}:=\frac{1}{P(A_u)}.
\end{equation*}
For any $c\in\mathbb R$, adding $c$ to $r_p$ changes neither
$P_{\lambda,p}^\star$ nor $\widetilde P_{n,\lambda}^{p}$ and replaces $d$
by $d-c$.  The optimality of $P_{\lambda,p}^\star$ for the proxy
KL-regularized objective gives
\begin{equation*}
    \mathbb E_{P_{\lambda,t}^\star}[r_p(X)]
    -
    \mathbb E_{P_{\lambda,p}^\star}[r_p(X)]
    \leq
    \lambda(D_t-D_p).
\end{equation*}
Moreover, Cauchy--Schwarz gives
\begin{align*}
&\left|
    \mathbb E_{P_{\lambda,u}^\star}[d(X)-c]
\right|\\
&\quad\leq
\left(
    \sum_{x\in \mathcal{X}_P}
    \frac{
        P_{\lambda,u}^\star(x)^2
    }{
        P(x)
    }
\right)^{1/2}
\left(
    \mathbb E_P[(d(X)-c)^2]
\right)^{1/2}.
\end{align*}
Since
\begin{equation*}
    \frac{
        P_{\lambda,u}^\star(x)
    }{
        P(x)
    }
    =
    \frac{
        e^{(r_u(x)-r_{u,\max})/\lambda}
    }{
        \mathbb E_P
        \left[
            e^{(r_u(X)-r_{u,\max})/\lambda}
        \right]
    }
    \leq
    \frac{1}{P(A_u)}
    =
    C_{\infty,u},
\end{equation*}
we have
\begin{equation*}
    \sum_{x\in \mathcal{X}_P}
    \frac{
        P_{\lambda,u}^\star(x)^2
    }{
        P(x)
    }
    \leq
    C_{\infty,u}.
\end{equation*}
Jensen's inequality further gives
\begin{equation*}
    \mathbb E_P[(d(X)-c)^2]
    \leq
    \lambda
    \log
    \mathbb E_P
    \left[
        e^{(d(X)-c)^2/\lambda}
    \right].
\end{equation*}
For every $c\in\mathbb R$,
\begin{align*}
&\mathbb E_{P_{\lambda,t}^\star}[r_t(X)]
-
\mathbb E_{P_{\lambda,p}^\star}[r_t(X)]\\
& =
\mathbb E_{P_{\lambda,t}^\star}[r_p(X)]
-
\mathbb E_{P_{\lambda,p}^\star}[r_p(X)] \\
& \qquad
+
\mathbb E_{P_{\lambda,t}^\star}[d(X)-c]
-
\mathbb E_{P_{\lambda,p}^\star}[d(X)-c].
\end{align*}
Combining this identity with the preceding bounds and the optimality
inequality for the proxy objective, optimizing over $c$ yields
\begin{align*}
&\mathbb E_{P_{\lambda,t}^\star}[r_t(X)]
-
\mathbb E_{P_{\lambda,p}^\star}[r_t(X)]\\
&\qquad\leq
\lambda(D_t-D_p)
+
\sqrt{\overline\varepsilon_\lambda}
\left(
    \sqrt{C_{\infty,t}}
    +
    \sqrt{C_{\infty,p}}
\right).
\end{align*}

Let $P_{\infty,t}^\star=P(\cdot\mid A_t)$.  
Since
$\mathbb E_{P_{\infty,t}^\star}[r_t(X)]=r_{t,\max}$ and
$D_{\mathrm{KL}}(P_{\infty,t}^\star\Vert P)=\log C_{\infty,t}$,
comparing $P_{\lambda,t}^\star$ with $P_{\infty,t}^\star$ in the true
KL-regularized objective gives
\begin{equation*}
    B_t(\lambda)
    \leq
    \lambda
    \left(
        \log C_{\infty,t}
        -
        D_t
    \right).
\end{equation*}
Finally, the exact ExpBoN decomposition gives
\begin{equation*}
    \mathbb E_{P_{\lambda,p}^\star}[r_t(X)]
    -
    \mathbb E_{\widetilde P_{n,\lambda}^{p}}[r_t(X)]
    \leq
    R_{\max}\rho_{\lambda,p}^{\,n},
\end{equation*}
and
\begin{align*}
    \rho_{\lambda,p}
    &=
    1-
    \mathbb E_P
    \left[
        e^{(r_p(X)-r_{p,\max})/\lambda}
    \right]\\
    &\leq
    1-P(A_p)
    =
    1-\frac{1}{C_{\infty,p}}.
\end{align*}
Indeed,
\begin{align*}
\operatorname{Reg}_{n,\lambda}
={}&
B_t(\lambda)
+
\left(
\mathbb E_{P_{\lambda,t}^\star}[r_t(X)]
-
\mathbb E_{P_{\lambda,p}^\star}[r_t(X)]
\right)\\
&+
\left(
\mathbb E_{P_{\lambda,p}^\star}[r_t(X)]
-
\mathbb E_{\widetilde P_{n,\lambda}^{p}}[r_t(X)]
\right).
\end{align*}
Adding the softening-bias, proxy-mismatch, and finite-$n$ terms cancels $D_t$.  
Dropping the nonpositive term $-\lambda D_p$ and applying the last bound on $\rho_{\lambda,p}$ proves the result.
\end{proof}

\subsection{Proofs for Section~\ref{subsec:clipped_exp_gsi}}

Fix a reasoning state $h$.  
Assume that $\pi_B(\cdot\mid h)\ll\pi_S(\cdot\mid h)$ and that $r(h,y)\leq R$ for every relevant $y$.  Define
\begin{equation*}
    Z_{\beta,B}(h)
    :=
    \mathbb E_{\pi_B(\cdot\mid h)}
    \left[
        e^{\beta r(h,Y)}
    \right],
\end{equation*}
and
\begin{equation*}
    \pi_{\beta,B}^\star(y\mid h)
    :=
    \frac{
        \pi_B(y\mid h)e^{\beta r(h,y)}
    }{
        Z_{\beta,B}(h)
    }.
\end{equation*}
Recall that
\begin{equation*}
    s_C(h,y)
    =
    \beta r(h,y)+\min\{d(h,y),C\},
    \qquad
    U_C=\beta R+C.
\end{equation*}

\subsubsection{Proof of Theorem~\ref{thm:clipped_exp_gsi_bias}}

\begin{proof}
Throughout this proof, we suppress the conditioning on the fixed state $h$ when no ambiguity can arise.  Define the clipped tilted target
\begin{equation*}
    \pi_{\beta,C}^\star(y\mid h)
    :=
    \frac{
        \pi_S(y\mid h)e^{s_C(h,y)}
    }{
        Z_{\beta,C}(h)
    },
\end{equation*}
where
\begin{equation*}
    Z_{\beta,C}(h)
    :=
    \mathbb E_{\pi_S(\cdot\mid h)}
    \left[
        e^{s_C(h,Y)}
    \right].
\end{equation*}
For one candidate $Y\sim\pi_S(\cdot\mid h)$ and
$E\sim\operatorname{Exp}(1)$, let
\begin{equation*}
    H
    :=
    \{s_C(h,Y)+E\geq U_C\}.
\end{equation*}
Since $s_C(h,y)\leq U_C$,
\begin{equation*}
    \Pr(Y=y,H\mid h)
    =
    e^{-U_C}
    \pi_S(y\mid h)e^{s_C(h,y)}.
\end{equation*}
Consequently,
\begin{equation*}
    \Pr(Y=y\mid H,h)
    =
    \pi_{\beta,C}^\star(y\mid h),
\end{equation*}
and the per-candidate no-hit probability is
\begin{equation*}
    \rho_{\beta,C}(h)
    =
    1-e^{-U_C}Z_{\beta,C}(h).
\end{equation*}
Conditional on $H$, exponential memorylessness gives
\begin{equation*}
    s_C(h,Y)+E-U_C
    \sim
    \operatorname{Exp}(1),
\end{equation*}
independently of $Y$.  Hence, conditional on the set of hit indices, the hit labels are i.i.d. from $\pi_{\beta,C}^\star(\cdot\mid h)$ and are independent of their exponential overshoots.  Selecting the largest overshoot, as full clipped ExpBoN does, or selecting the first hit in an independent random order, as Algorithm~\ref{alg:clipped_exp_gsi} does, therefore yields the same marginal hit-branch law $\pi_{\beta,C}^\star(\cdot\mid h)$.
More explicitly, conditional on any nonempty hit set, both rules select an index uniformly from that set, independently of the i.i.d. hit labels. If no hit occurs, Algorithm~\ref{alg:clipped_exp_gsi} evaluates all candidates and returns the exponential-noise maximizer formed from the same clipped scores and noises. Thus, the index $i^\star$ selected by Lines~1--14 has exactly the finite-$n$ clipped ExpBoN law; the subsequent GSI acceptance gate and fallback are not part of $\widetilde\pi_{n,\beta,C}$.
Therefore,
\begin{equation*}
    \mathrm{TV}
    \left(
        \widetilde\pi_{n,\beta,C},
        \pi_{\beta,C}^\star
    \right)
    \leq
    \rho_{\beta,C}(h)^n.
\end{equation*}

Define
\begin{equation*}
    w_C(h,y)
    :=
    \min\left\{
        1,
        e^{C-d(h,y)}
    \right\}.
\end{equation*}
Then
\begin{equation*}
    \pi_S(y\mid h)e^{s_C(h,y)}
    =
    \pi_B(y\mid h)e^{\beta r(h,y)}w_C(h,y).
\end{equation*}
Writing
\begin{equation*}
    m_C(h)
    :=
    \mathbb E_{\pi_{\beta,B}^\star(\cdot\mid h)}
    [w_C(h,Y)]
    =
    1-\tau_C(h),
\end{equation*}
we obtain
\begin{equation*}
    \pi_{\beta,C}^\star(y\mid h)
    =
    \frac{
        \pi_{\beta,B}^\star(y\mid h)w_C(h,y)
    }{
        m_C(h)
    }.
\end{equation*}
Since $0\leq w_C\leq1$ and $0<m_C\leq1$,
\begin{align*}
&\mathrm{TV}
\left(
    \pi_{\beta,C}^\star,
    \pi_{\beta,B}^\star
\right)\\
&\quad=
1-
\mathbb E_{\pi_{\beta,B}^\star}
\left[
    \min\left\{
        \frac{w_C}{m_C},
        1
    \right\}
\right]\\
&\quad\leq
1-
\mathbb E_{\pi_{\beta,B}^\star}[w_C]
=
\tau_C(h).
\end{align*}
The triangle inequality proves
\eqref{eq:finite_n_plus_clipping_bias}.  Finally,
\begin{equation*}
    \left|
        \mathbb E_Q[g]-\mathbb E_{Q'}[g]
    \right|
    \leq
    \left(
        \sup_y g(y)-\inf_y g(y)
    \right)
    \mathrm{TV}(Q,Q')
\end{equation*}
gives~\eqref{eq:finite_n_plus_clipping_utility}.
\end{proof}

\subsection{Report-Noisy-Max with Gumbel Noise}
\label{subapp::SBoN_gumbel}

We show that the final selection step of SBoN is equivalent in distribution to report-noisy-max with independent standard Gumbel noises: 
Fix $X_1,\ldots,X_n$ and let
$G_1,\ldots,G_n\overset{\mathrm{iid}}{\sim}\operatorname{Gumbel}(0,1)$,
independently of the candidates.
Define
\begin{equation*}
K
:=
\arg\max_{i=1,\ldots,n}
\left\{
\frac{r(X_i)}{\lambda}+G_i
\right\}.
\end{equation*}
Then, for every $i\in\{1,\ldots,n\}$,
\begin{equation}
\label{eq:sbon_gumbel_probability}
\Pr \left(K=i\mid X_1,\ldots,X_n\right)
=
\frac{
e^{r(X_i)/\lambda}
}{
\sum_{j=1}^n e^{r(X_j)/\lambda}
}.
\end{equation}

\begin{proof}
Condition on $X_1,\ldots,X_n$ and write
\begin{equation*}
a_i:=\frac{r(X_i)}{\lambda},
\qquad
i=1,\ldots,n.
\end{equation*}
The cumulative distribution function and density of a standard Gumbel random
variable are
\begin{equation*}
F_G(g)
=
\exp\left(-e^{-g}\right)
\end{equation*}
and
\begin{equation*}
f_G(g)
=
e^{-g}
\exp\left(-e^{-g}\right),
\end{equation*}
respectively.  Since the Gumbel distribution is continuous, the maximizer is
unique almost surely.  Hence,
\begin{align*}
& \Pr\left(K=i\mid X_1,\ldots,X_n\right)
\\
&\quad=
\int_{-\infty}^{\infty}
f_G(t-a_i)
\prod_{j\neq i}
F_G(t-a_j)
,dt.
\end{align*}
Using the expressions for $f_G$ and $F_G$, the integrand becomes
\begin{align*}
&f_G(t-a_i)
\prod_{j\neq i}
F_G(t-a_j)
\\
& =
e^{-(t-a_i)}
\exp\left(
-e^{-(t-a_i)}
\right)
\prod_{j\neq i}
\exp\left(
-e^{-(t-a_j)}
\right)
\\
& =
e^{a_i}e^{-t}
\exp\left(
-e^{-t}
\sum_{j=1}^n e^{a_j}
\right).
\end{align*}
Let
\begin{equation*}
S:=\sum_{j=1}^n e^{a_j}
\end{equation*}
and make the change of variables
\begin{equation*}
u:=Se^{-t}.
\end{equation*}
Then
\begin{equation*}
du=-Se^{-t}\,dt,
\end{equation*}
and therefore
\begin{align*}
&\Pr\left(K=i\mid X_1,\ldots,X_n\right)
\\ 
& = 
e^{a_i}
\int_{-\infty}^{\infty}
e^{-t}
\exp\left(-Se^{-t}\right) dt
\\
& = 
\frac{e^{a_i}}{S}
\int_0^\infty e^{-u}\,du
\\ 
& =
\frac{e^{a_i}}{\sum_{j=1}^n e^{a_j}}.
\end{align*}
Substituting $a_j=r(X_j)/\lambda$ proves
\eqref{eq:sbon_gumbel_probability}.
\end{proof}

\subsection{More Implementation Details}
\label{sec:implementation_details}

\paragraph{Models and serving.}
The Qwen2.5-Math~\citep{yang2024qwen2} configuration uses
\textsc{Qwen2.5-Math-1.5B-Instruct} as the draft policy $\pi_S$, \textsc{Qwen2.5-Math-7B-Instruct} as the target policy $\pi_B$, and \textsc{Qwen2.5-Math-PRM-7B} as the reward model; the Qwen3~\citep{yang2025qwen3} configuration uses \textsc{Qwen3-1.7B} and \textsc{Qwen3-14B} with the same PRM and thinking mode disabled. All models are served as separate vLLM~\citep{kwon2023efficient} instances with prefix caching enabled. Qwen2.5-Math runs use two NVIDIA A100-40 GPUs per evaluation run (draft and target colocated on one device, PRM on the other); Qwen3 runs host all three models on a single A100-80GB. Step timing excludes one-time engine initialization. Because the two families use different serving layouts, wall-clock numbers should be compared within, not across, model families. Draft- and target-policy log-probabilities are obtained through vLLM's prompt-logprobs interface with prefix caching enabled. Since prefix-cached positions return placeholder entries, we apply a thin runtime patch that marks such rows, validate every returned log-probability (rejecting non-finite or positive values), and transparently re-fetch affected candidates with the cache bypassed; the patch does not alter vLLM's sampling or scheduling behavior and is included in the code release.

\paragraph{Benchmarks and evaluation.}
We evaluate on MATH500~\citep{lightman2024let}, MMLU-STEM~\citep{hendrycks2020measuring}, and Minerva Math~\citep{lewkowycz2022solving}. For MATH500 and MMLU-STEM we use fixed 400-problem subsets (for MATH500, drawn from the pool that remains after removing the 60-problem calibration block); for Minerva Math we use the full 272-problem set. The same problems and random seeds are shared across all methods and candidate budgets, enabling paired comparisons. Results are reported as means and standard deviations over three random seeds for Qwen2.5-Math and two random seeds for Qwen3.

\paragraph{Inference configuration.}
Following GSI~\citep{geuter2025guided}, all methods use candidate budgets $n\in\{2,4,8,16\}$ ($n\in\{4,16\}$ for Qwen3), inverse temperature $\beta=20$, acceptance threshold $u=0.5$, sampling temperature $0.7$, top-$p=1.0$, and at most 512 new tokens per reasoning step. RSD~\citep{liao2025reward} follows the implementation and hyperparameter configuration adopted by GSI, with acceptance threshold $\delta=0.7$. The early-exit scan first evaluates a batch of $b=\max(1,\,\lfloor n/4\rfloor)$ candidates (i.e., $b=1,1,2,4$ for $n=2,4,8,16$); if none passes the exit test, all remaining candidates are evaluated in a single second batch, which for prefix-cache efficiency and reduces scheduling overhead. The batch schedule is a serving-level choice fixed a priori and not tuned on any evaluation set: given the presampled candidate order, the algorithm always selects the first candidate that passes the exit test, so batching changes only the realized computation, never the selected step.

\paragraph{Calibration of the clipping level.}
The only hyperparameter ExpGSI adds over GSI is the clipping level $C$, which acts as a working upper bound on the per-step log-likelihood ratio $d=\log\pi_B(y\mid h)-\log\pi_S(y\mid h)$: the early-exit ceiling $\beta+C$ is valid precisely because $d$ rarely exceeds $C$. We calibrate $C$ as a robust estimate of this upper bound. Rolling out $60$ MATH-500 problems held out from all evaluation subsets, with $64$ draft candidates per reasoning step, we compute $d$ for every candidate and set $C$ to the empirical $95$th percentile, yielding $C=0.45$. A percentile is preferred over the sample maximum because $d$ is heavy-tailed: the maximum would inflate the ceiling--and thus suppress early exits--for the sake of rare outliers, whereas the ${\sim}5\%$ of candidates with $d>C$ are simply clipped--a change to the target distribution rather than a sampling error, bounded in Theorem~\ref{thm:clipped_exp_gsi_bias}--and, in practice, the selection rule is left nearly intact. Calibrated once, $C$ is held fixed across all benchmarks, candidate budgets, and both model families without retuning; accuracy is insensitive to this choice over a wide range (Table~\ref{tab:clip_sensitivity}).

\paragraph{Computational accounting.}
Estimated FLOPs follow the convention of RSD~\citep{liao2025reward}: each token processed by a model costs $2N$ FLOPs, where $N$ is the model's parameter count, and we sum over the forward passes each method actually executes. This comprises (i)~generation, charged to the generating policy, including target-policy fallback regeneration; (ii)~reward-model scoring, where each call re-reads the full sequence (prompt, accepted prefix, and candidate steps), consistent with the official RSD implementation, since no key--value cache persists across PRM calls; and (iii)~the target-policy log-likelihoods used by GSI and ExpGSI, which are served with prefix caching and hence charged only for the candidate suffix, while cache cold starts and refetches are charged at full sequence length. Per-problem totals are averaged over the evaluation set.

\paragraph{Computing infrastructure.}
Qwen2.5-Math experiments (including the reward-model and clipping-level ablations) ran on a server with $8\times$ NVIDIA A100-SXM4-40GB GPUs, AMD EPYC 7542 CPUs (128 hardware threads), 2\,TB RAM, and Oracle Linux 9.7, allocating two GPUs per evaluation run as described above. Qwen3 experiments ran on a cloud instance with $6\times$ NVIDIA A100-SXM4-80GB GPUs and 2\,TB RAM (Linux), hosting all three models of a run on a single GPU. Both machines used the same software stack: Python~3.12, PyTorch~2.11 (CUDA~13), vLLM~0.24, and Transformers~5.13; a complete dependency freeze is included in the code release.

\subsection{Additional experimental results}
\label{sec:Additional experimental results}
% ============================================================
% MATH500
% ============================================================

\begin{table*}[t]
\centering
\small
\setlength{\tabcolsep}{5pt}
\renewcommand{\arraystretch}{0.95}
\begin{tabular}{@{}c l c c c c@{}}
\toprule
$n$ & Method & Acc. (\%) & Time/step (s)
& Accept. (\%) & Est. TFLOPs/prob. \\
\midrule

\multirow{5}{*}{2}
& S-BoN ($\pi_S$) & $76.5 \pm 0.4$ & $0.40 \pm 0.00$ & -- & $121 \pm 5$ \\
& RSD$^\dagger$ & $79.3 \pm 1.9$ & $0.49 \pm 0.02$ & $96.7 \pm 0.3$ & $118 \pm 3$ \\
& S-BoN ($\pi_B$) & $83.2 \pm 1.1$ & $0.91 \pm 0.00$ & -- & $148 \pm 1$ \\
& GSI & $80.3 \pm 1.2$ & $0.66 \pm 0.02$ & $89.6 \pm 0.5$ & $174 \pm 5$ \\
& \textbf{ExpGSI (ours)} & $80.2 \pm 0.7$ & $0.61 \pm 0.02$
& $89.5 \pm 0.8$ & $\mathbf{147 \pm 3}$ \\
\midrule

\multirow{5}{*}{4}
& S-BoN ($\pi_S$) & $78.1 \pm 0.8$ & $0.52 \pm 0.03$ & -- & $244 \pm 22$ \\
& RSD$^\dagger$ & $80.0 \pm 1.1$ & $0.59 \pm 0.02$ & $97.6 \pm 0.1$ & $247 \pm 5$ \\
& S-BoN ($\pi_B$) & $83.2 \pm 0.8$ & $1.11 \pm 0.02$ & -- & $322 \pm 6$ \\
& GSI & $81.8 \pm 1.4$ & $0.78 \pm 0.02$ & $92.4 \pm 0.4$ & $381 \pm 8$ \\
& \textbf{ExpGSI (ours)} & $82.3 \pm 1.7$ & $0.74 \pm 0.02$
& $92.3 \pm 0.7$ & $\mathbf{288 \pm 5}$ \\
\midrule

\multirow{5}{*}{8}
& S-BoN ($\pi_S$) & $80.0 \pm 0.7$ & $0.71 \pm 0.03$ & -- & $532 \pm 40$ \\
& RSD$^\dagger$ & $80.6 \pm 1.2$ & $0.77 \pm 0.01$ & $98.1 \pm 0.6$ & $511 \pm 29$ \\
& S-BoN ($\pi_B$) & $84.5 \pm 0.5$ & $1.44 \pm 0.03$ & -- & $647 \pm 30$ \\
& GSI & $82.3 \pm 1.2$ & $1.01 \pm 0.01$ & $94.1 \pm 0.2$ & $835 \pm 14$ \\
& \textbf{ExpGSI (ours)} & $82.1 \pm 0.6$ & $\mathbf{0.91 \pm 0.01}$
& $94.3 \pm 0.1$ & $\mathbf{524 \pm 22}$ \\
\midrule

\multirow{5}{*}{16}
& S-BoN ($\pi_S$) & $80.4 \pm 0.4$ & $1.08 \pm 0.02$ & -- & $1130 \pm 14$ \\
& RSD$^\dagger$ & $80.2 \pm 0.9$ & $1.15 \pm 0.05$ & $98.3 \pm 0.4$ & $1046 \pm 17$ \\
& S-BoN ($\pi_B$) & $84.2 \pm 1.4$ & $1.96 \pm 0.02$ & -- & $1366 \pm 16$ \\
& GSI & $82.2 \pm 1.4$ & $1.52 \pm 0.06$ & $95.4 \pm 0.5$ & $1679 \pm 10$ \\
& \textbf{ExpGSI (ours)} & $81.8 \pm 0.9$ & $\mathbf{1.17 \pm 0.02}$
& $95.6 \pm 0.3$ & $\mathbf{900 \pm 24}$ \\
\bottomrule
\end{tabular}

\caption{
Complete results on MATH500 across candidate budgets.
Reported values are means and standard deviations over three random seeds.
Bold efficiency values indicate improvements of ExpGSI over GSI.
$^\dagger$RSD uses the implementation and hyperparameter configuration
adopted by GSI.
}
\label{tab:app_math500_complete}
\end{table*}

% ============================================================
% MMLU-STEM
% ============================================================

\begin{table*}[t]
\centering
\small
\setlength{\tabcolsep}{5pt}
\renewcommand{\arraystretch}{0.95}
\begin{tabular}{@{}c l c c c c@{}}
\toprule
$n$ & Method & Acc. (\%) & Time/step (s)
& Accept. (\%) & Est. TFLOPs/prob. \\
\midrule

\multirow{5}{*}{2}
& S-BoN ($\pi_S$) & $59.2 \pm 2.1$ & $0.35 \pm 0.00$ & -- & $107 \pm 0$ \\
& RSD$^\dagger$ & $67.1 \pm 0.8$ & $0.48 \pm 0.01$ & $89.4 \pm 0.4$ & $92 \pm 2$ \\
& S-BoN ($\pi_B$) & $68.8 \pm 2.3$ & $0.79 \pm 0.01$ & -- & $100 \pm 1$ \\
& GSI & $68.4 \pm 1.5$ & $0.81 \pm 0.01$ & $62.4 \pm 1.0$ & $133 \pm 0$ \\
& \textbf{ExpGSI (ours)} & $67.2 \pm 1.2$ & $\mathbf{0.77 \pm 0.01}$
& $62.6 \pm 1.0$ & $\mathbf{110 \pm 3}$ \\
\midrule

\multirow{5}{*}{4}
& S-BoN ($\pi_S$) & $64.0 \pm 0.7$ & $0.44 \pm 0.01$ & -- & $212 \pm 9$ \\
& RSD$^\dagger$ & $67.4 \pm 2.0$ & $0.56 \pm 0.01$ & $92.9 \pm 0.3$ & $199 \pm 7$ \\
& S-BoN ($\pi_B$) & $72.0 \pm 2.0$ & $0.94 \pm 0.01$ & -- & $207 \pm 3$ \\
& GSI & $69.0 \pm 0.4$ & $0.94 \pm 0.02$ & $69.9 \pm 0.3$ & $283 \pm 4$ \\
& \textbf{ExpGSI (ours)} & $67.8 \pm 1.8$ & $\mathbf{0.90 \pm 0.02}$
& $69.6 \pm 0.9$ & $\mathbf{224 \pm 5}$ \\
\midrule

\multirow{5}{*}{8}
& S-BoN ($\pi_S$) & $63.6 \pm 1.7$ & $0.61 \pm 0.01$ & -- & $517 \pm 54$ \\
& RSD$^\dagger$ & $68.8 \pm 2.1$ & $0.76 \pm 0.01$ & $94.6 \pm 0.1$ & $468 \pm 21$ \\
& S-BoN ($\pi_B$) & $73.1 \pm 1.0$ & $1.18 \pm 0.02$ & -- & $423 \pm 12$ \\
& GSI & $68.9 \pm 1.3$ & $1.23 \pm 0.04$ & $75.5 \pm 0.6$ & $581 \pm 2$ \\
& \textbf{ExpGSI (ours)} & $69.3 \pm 3.0$ & $\mathbf{1.15 \pm 0.00}$
& $75.1 \pm 0.4$ & $\mathbf{425 \pm 5}$ \\
\midrule

\multirow{5}{*}{16}
& S-BoN ($\pi_S$) & $67.5 \pm 1.7$ & $0.94 \pm 0.02$ & -- & $1072 \pm 54$ \\
& RSD$^\dagger$ & $69.2 \pm 1.7$ & $1.10 \pm 0.03$ & $95.7 \pm 0.3$ & $1020 \pm 84$ \\
& S-BoN ($\pi_B$) & $71.6 \pm 2.3$ & $1.56 \pm 0.04$ & -- & $869 \pm 8$ \\
& GSI & $69.6 \pm 1.9$ & $1.81 \pm 0.04$ & $80.2 \pm 0.7$ & $1224 \pm 21$ \\
& \textbf{ExpGSI (ours)} & $67.8 \pm 0.4$ & $\mathbf{1.53 \pm 0.04}$
& $79.9 \pm 1.0$ & $\mathbf{822 \pm 20}$ \\
\bottomrule
\end{tabular}

\caption{
Complete results on MMLU-STEM across candidate budgets.
Reported values are means and standard deviations over three random seeds.
Bold efficiency values indicate improvements of ExpGSI over GSI.
$^\dagger$RSD uses the implementation and hyperparameter configuration
adopted by GSI.
}
\label{tab:app_mmlu_stem_complete}
\end{table*}

% ============================================================
% Minerva Math
% ============================================================

\begin{table*}[t]
\centering
\small
\setlength{\tabcolsep}{5pt}
\renewcommand{\arraystretch}{0.95}
\begin{tabular}{@{}c l c c c c@{}}
\toprule
$n$ & Method & Acc. (\%) & Time/step (s)
& Accept. (\%) & Est. TFLOPs/prob. \\
\midrule

\multirow{5}{*}{2}
& S-BoN ($\pi_S$) & $25.0 \pm 1.3$ & $0.39 \pm 0.01$ & -- & $149 \pm 5$ \\
& RSD$^\dagger$ & $24.9 \pm 1.7$ & $0.50 \pm 0.02$ & $94.1 \pm 0.2$ & $150 \pm 4$ \\
& S-BoN ($\pi_B$) & $31.9 \pm 1.5$ & $0.90 \pm 0.02$ & -- & $171 \pm 5$ \\
& GSI & $28.6 \pm 1.1$ & $0.77 \pm 0.01$ & $78.1 \pm 0.3$ & $198 \pm 6$ \\
& \textbf{ExpGSI (ours)} & $29.7 \pm 0.6$ & $\mathbf{0.73 \pm 0.02}$
& $78.5 \pm 0.8$ & $\mathbf{175 \pm 5}$ \\
\midrule

\multirow{5}{*}{4}
& S-BoN ($\pi_S$) & $25.7 \pm 0.4$ & $0.51 \pm 0.01$ & -- & $301 \pm 10$ \\
& RSD$^\dagger$ & $24.4 \pm 1.1$ & $0.60 \pm 0.02$ & $96.6 \pm 0.5$ & $325 \pm 11$ \\
& S-BoN ($\pi_B$) & $31.2 \pm 0.6$ & $1.13 \pm 0.01$ & -- & $363 \pm 10$ \\
& GSI & $28.9 \pm 0.8$ & $0.90 \pm 0.01$ & $84.1 \pm 0.0$ & $445 \pm 11$ \\
& \textbf{ExpGSI (ours)} & $29.4 \pm 1.1$ & $\mathbf{0.88 \pm 0.02}$
& $83.7 \pm 0.5$ & $\mathbf{342 \pm 10}$ \\
\midrule

\multirow{5}{*}{8}
& S-BoN ($\pi_S$) & $25.2 \pm 0.8$ & $0.74 \pm 0.00$ & -- & $650 \pm 10$ \\
& RSD$^\dagger$ & $24.9 \pm 1.3$ & $0.81 \pm 0.01$ & $97.3 \pm 0.2$ & $654 \pm 17$ \\
& S-BoN ($\pi_B$) & $32.5 \pm 2.1$ & $1.44 \pm 0.05$ & -- & $758 \pm 11$ \\
& GSI & $28.8 \pm 1.5$ & $1.18 \pm 0.02$ & $88.2 \pm 0.4$ & $949 \pm 38$ \\
& \textbf{ExpGSI (ours)} & $29.0 \pm 0.7$ & $\mathbf{1.11 \pm 0.05}$
& $87.6 \pm 0.9$ & $\mathbf{693 \pm 26}$ \\
\midrule

\multirow{5}{*}{16}
& S-BoN ($\pi_S$) & $26.5 \pm 1.0$ & $1.14 \pm 0.02$ & -- & $1343 \pm 19$ \\
& RSD$^\dagger$ & $26.2 \pm 2.4$ & $1.27 \pm 0.02$ & $98.0 \pm 0.4$ & $1427 \pm 42$ \\
& S-BoN ($\pi_B$) & $31.6 \pm 1.0$ & $1.99 \pm 0.12$ & -- & $1538 \pm 21$ \\
& GSI & $28.7 \pm 0.7$ & $1.80 \pm 0.02$ & $90.6 \pm 0.2$ & $1977 \pm 39$ \\
& \textbf{ExpGSI (ours)} & $29.4 \pm 1.9$ & $\mathbf{1.50 \pm 0.02}$
& $90.5 \pm 0.4$ & $\mathbf{1274 \pm 50}$ \\
\bottomrule
\end{tabular}

\caption{
Complete results on Minerva Math across candidate budgets. Reported values are means and standard deviations over three random seeds. Bold efficiency values indicate improvements of ExpGSI over GSI. $^\dagger$RSD uses the implementation and hyperparameter configuration adopted by GSI.
}
\label{tab:app_minerva_complete}
\end{table*}

\paragraph{Results across candidate budgets (Qwen2.5-Math).}
Tables~\ref{tab:app_math500_complete}--\ref{tab:app_minerva_complete} report the complete per-benchmark results. The computational advantage of ExpGSI over GSI increases with the candidate budget on every benchmark, reaching $33$--$46\%$ of estimated compute at $n{=}16$ with latency reductions of $16$--$23\%$. Acceptance rates remain close to GSI's throughout, and accuracy differences show no consistent direction. ExpGSI thus reduces the selection overhead of GSI without systematically changing its solution quality.

\begin{table*}[t]
\centering
\small
\setlength{\tabcolsep}{5pt}
\renewcommand{\arraystretch}{0.95}
\begin{tabular}{@{}c c l c c c c@{}}
\toprule
Benchmark & $n$ & Method & Acc. (\%) & Time/step (s)
& Accept. (\%) & Est. TFLOPs/prob. \\
\midrule

\multirow{10}{*}{MATH500}
& \multirow{5}{*}{4}
& S-BoN ($\pi_S$) & $67.6 \pm 0.2$ & $0.27 \pm 0.00$ & -- & $714 \pm 13$ \\
& & RSD$^\dagger$ & $67.2 \pm 0.0$ & $0.28 \pm 0.00$ & $99.5 \pm 0.0$ & $715 \pm 5$ \\
& & S-BoN ($\pi_B$) & $74.0 \pm 0.4$ & $0.75 \pm 0.00$ & -- & $802 \pm 1$ \\
& & GSI & $68.2 \pm 0.7$ & $0.46 \pm 0.01$ & $98.4 \pm 0.2$ & $1455 \pm 1$ \\
& & \textbf{ExpGSI (ours)} & $67.9 \pm 0.2$ & $\mathbf{0.40 \pm 0.01}$
& $98.2 \pm 0.1$ & $\mathbf{1006 \pm 7}$ \\
\cmidrule(lr){2-7}
& \multirow{5}{*}{16}
& S-BoN ($\pi_S$) & $68.4 \pm 0.5$ & $0.70 \pm 0.00$ & -- & $2874 \pm 14$ \\
& & RSD$^\dagger$ & $69.0 \pm 1.4$ & $0.71 \pm 0.00$ & $99.7 \pm 0.0$ & $2884 \pm 50$ \\
& & S-BoN ($\pi_B$) & $73.9 \pm 0.2$ & $1.33 \pm 0.00$ & -- & $3167 \pm 71$ \\
& & GSI & $68.5 \pm 0.4$ & $1.34 \pm 0.01$ & $99.2 \pm 0.2$ & $6644 \pm 37$ \\
& & \textbf{ExpGSI (ours)} & $69.0 \pm 1.1$ & $\mathbf{0.80 \pm 0.01}$
& $99.3 \pm 0.0$ & $\mathbf{3181 \pm 11}$ \\
\midrule

\multirow{10}{*}{MMLU-STEM}
& \multirow{5}{*}{4}
& S-BoN ($\pi_S$) & $69.8 \pm 0.7$ & $0.28 \pm 0.01$ & -- & $374 \pm 4$ \\
& & RSD$^\dagger$ & $71.5 \pm 1.1$ & $0.33 \pm 0.00$ & $96.5 \pm 0.0$ & $395 \pm 10$ \\
& & S-BoN ($\pi_B$) & $86.0 \pm 0.4$ & $0.80 \pm 0.00$ & -- & $510 \pm 7$ \\
& & GSI & $82.1 \pm 1.2$ & $0.70 \pm 0.02$ & $80.5 \pm 0.3$ & $925 \pm 1$ \\
& & \textbf{ExpGSI (ours)} & $80.9 \pm 0.5$ & $\mathbf{0.67 \pm 0.00}$
& $80.3 \pm 0.4$ & $\mathbf{736 \pm 14}$ \\
\cmidrule(lr){2-7}
& \multirow{5}{*}{16}
& S-BoN ($\pi_S$) & $69.8 \pm 0.4$ & $0.67 \pm 0.00$ & -- & $1616 \pm 41$ \\
& & RSD$^\dagger$ & $74.2 \pm 1.1$ & $0.72 \pm 0.01$ & $97.7 \pm 0.1$ & $1629 \pm 22$ \\
& & S-BoN ($\pi_B$) & $86.0 \pm 1.8$ & $1.30 \pm 0.01$ & -- & $2002 \pm 12$ \\
& & GSI & $81.5 \pm 0.0$ & $1.53 \pm 0.00$ & $85.9 \pm 0.0$ & $4558 \pm 87$ \\
& & \textbf{ExpGSI (ours)} & $81.2 \pm 0.4$ & $\mathbf{1.18 \pm 0.03}$
& $86.0 \pm 0.4$ & $\mathbf{2831 \pm 41}$ \\
\midrule

\multirow{10}{*}{Minerva Math}
& \multirow{5}{*}{4}
& S-BoN ($\pi_S$) & $26.1 \pm 2.1$ & $0.30 \pm 0.00$ & -- & $475 \pm 6$ \\
& & RSD$^\dagger$ & $28.7 \pm 0.5$ & $0.34 \pm 0.00$ & $97.1 \pm 0.2$ & $513 \pm 7$ \\
& & S-BoN ($\pi_B$) & $37.9 \pm 1.6$ & $0.78 \pm 0.01$ & -- & $613 \pm 14$ \\
& & GSI & $33.3 \pm 0.3$ & $0.63 \pm 0.00$ & $89.3 \pm 0.2$ & $1226 \pm 26$ \\
& & \textbf{ExpGSI (ours)} & $32.5 \pm 0.8$ & $\mathbf{0.57 \pm 0.00}$
& $89.9 \pm 0.5$ & $\mathbf{932 \pm 13}$ \\
\cmidrule(lr){2-7}
& \multirow{5}{*}{16}
& S-BoN ($\pi_S$) & $26.3 \pm 1.3$ & $0.75 \pm 0.00$ & -- & $2058 \pm 65$ \\
& & RSD$^\dagger$ & $27.4 \pm 1.8$ & $0.78 \pm 0.00$ & $98.0 \pm 0.0$ & $2095 \pm 29$ \\
& & S-BoN ($\pi_B$) & $37.9 \pm 1.6$ & $1.39 \pm 0.02$ & -- & $2527 \pm 18$ \\
& & GSI & $32.9 \pm 0.8$ & $1.65 \pm 0.00$ & $93.0 \pm 0.1$ & $6106 \pm 5$ \\
& & \textbf{ExpGSI (ours)} & $35.3 \pm 3.1$ & $\mathbf{1.16 \pm 0.00}$
& $93.7 \pm 0.4$ & $\mathbf{3495 \pm 48}$ \\
\bottomrule
\end{tabular}

\caption{
Complete per-benchmark results for the Qwen3 configuration (\textsc{Qwen3-1.7B} draft, \textsc{Qwen3-14B} target, thinking mode disabled, same PRM and clipping level as the main experiments) at $n\in\{4,16\}$. Reported values are means and standard deviations over two random seeds. Bold efficiency values indicate improvements of ExpGSI over GSI. $^\dagger$RSD uses the implementation and hyperparameter configuration adopted by GSI.
}
\label{tab:app_qwen3_complete}
\end{table*}

\paragraph{Per-benchmark results for Qwen3.}
Table~\ref{tab:app_qwen3_complete} breaks the Qwen3 rows of Table~\ref{tab:model_family_average} down by benchmark. The picture from the Qwen2.5-Math experiments transfers: accuracy and acceptance remain at GSI's level on every benchmark (the largest deviation, on Minerva Math, lies within its seed spread), while at $n{=}16$ estimated computation drops by $38$--$52\%$ and time per step by $23$--$40\%$. The larger savings relative to Qwen2.5-Math are consistent with the larger target model: the avoided target-policy evaluations account for a larger share of total cost.

\paragraph{Reward-model ablation (Skywork PRM).}
Table~\ref{tab:skywork_ablation} reports the complete results of the reward-model ablation described in Section~\ref{sec:analysis}: all five methods on MATH500 at $n\in\{4,16\}$ over three random seeds, with the 7B math-specialized PRM replaced by \textsc{Skywork-o1-Open-PRM-Qwen-2.5-1.5B}~\citep{team2024skywork} and all other settings kept identical to the main experiments. The table follows the same format as the per-benchmark tables above, reporting accuracy, time per step, acceptance rate, and estimated TFLOPs per problem.

\begin{table}[t]
\centering
\scriptsize
\setlength{\tabcolsep}{1.6pt}
\renewcommand{\arraystretch}{0.94}
\begin{tabular}{@{}c l c c c c@{}}
\toprule
$n$ & Method
& Acc. (\%)
& Time (s)
& Accpt. (\%)
& TFLOPs \\
\midrule

\multirow{5}{*}{4}
& S-BoN ($\pi_S$)
& $76.5{\pm}0.0$
& $.44{\pm}.00$
& --
& $41{\pm}1$ \\

& RSD$^\dagger$
& $79.2{\pm}1.4$
& $.93{\pm}.02$
& $65.5{\pm}.8$
& $43{\pm}2$ \\

& S-BoN ($\pi_B$)
& $82.7{\pm}0.9$
& $1.09{\pm}.02$
& --
& $76{\pm}0$ \\

& GSI
& $83.6{\pm}1.0$
& $.98{\pm}.01$
& $73.0{\pm}1.2$
& $141{\pm}3$ \\

& ExpGSI (ours)
& $83.8{\pm}0.5$
& $1.01{\pm}.03$
& $72.2{\pm}.7$
& $117{\pm}2$ \\
\midrule

\multirow{5}{*}{16}
& S-BoN ($\pi_S$)
& $75.1{\pm}1.7$
& $.67{\pm}.03$
& --
& $148{\pm}6$ \\

& RSD$^\dagger$
& $80.4{\pm}1.0$
& $1.24{\pm}.02$
& $72.7{\pm}.0$
& $147{\pm}4$ \\

& S-BoN ($\pi_B$)
& $84.4{\pm}1.3$
& $1.51{\pm}.03$
& --
& $269{\pm}4$ \\

& GSI
& $83.2{\pm}1.2$
& $1.45{\pm}.07$
& $78.9{\pm}1.2$
& $540{\pm}12$ \\

& ExpGSI (ours)
& $83.1{\pm}0.9$
& $\mathbf{1.41{\pm}.01}$
& $79.5{\pm}2.3$
& $\mathbf{410{\pm}22}$ \\
\bottomrule
\end{tabular}

\caption{Results with the Skywork reward model on MATH500.  Values are means and standard deviations over three seeds.}
\label{tab:skywork_ablation}
\end{table}

\paragraph{Sensitivity to the clipping level.}

\begin{table}[!t]
\centering
\small
\setlength{\tabcolsep}{6pt}
\begin{tabular}{@{}cccc@{}}
\toprule
$C/\widehat C$
& Acc. (\%)
& Scored cand./step
& Est. TFLOPs/prob. \\
\midrule
0.5                  & $82.4 \pm 1.9$ & 8.9  & 892  \\
0.75                 & $83.0 \pm 0.4$ & 8.8  & 892  \\
1.0 (default)        & $82.9 \pm 0.9$ & 9.2  & 941  \\
1.5                  & $82.3 \pm 0.7$ & 9.6  & 951  \\
2.0                  & $80.8 \pm 1.4$ & 10.2 & 993  \\
$\infty$ (no clipping)
                     & $81.9 \pm 0.5$ & 16.0 & 1592 \\
\bottomrule
\end{tabular}
\caption{
Sensitivity of ExpGSI to the clipping level on MATH500 with $n=16$.
Accuracy is reported as mean $\pm$ standard deviation over two seeds.
When $C=\infty$, clipping and early exit are disabled. 
}
\label{tab:clip_sensitivity}
\end{table}

We evaluate ExpGSI on MATH500 with $n=16$ while varying
\[
C/\widehat C\in\{0.5,0.75,1.0,1.5,2.0,\infty\},
\]
where $\widehat C=0.45$ is used in the main experiments. Each configuration is evaluated on the same 400 problems using two random seeds. We report final-answer accuracy, the average number of candidates scored per reasoning step, and estimated computation per problem.  When $C=\infty$, clipping and certificate-based early exit are disabled. The resulting selection rule replaces the Gumbel noise in GSI with exponential noise and scores all $n$ candidates. Accuracy remains broadly stable over the tested clipping levels. Increasing $C$ generally requires scoring more candidates. Removing clipping requires evaluating all 16 candidates and increases TFLOPs per problem. The finite clipping configurations therefore provide substantial computational savings without showing a consistent loss in accuracy.

\paragraph{ExpBoN vs. SBoN on real candidate pools.}
Figure~\ref{fig:realconv} repeats the finite-$n$ comparison of Figure~1 on real candidate pools. Rolling out the calibration problems step by step (Appendix \ref{sec:implementation_details}) yields one pool per reasoning step: the $64$ next-step candidates drawn from $\pi_S$ at that prefix, together with their PRM rewards and log-likelihood ratios. For each pool we take the empirical distribution over its candidates and compute the \emph{exact} finite-$n$ output laws of ExpBoN and SBoN by one-dimensional numerical integration--no sampling is involved, and the integrator reproduces the exact values of the synthetic example in Figure~1.

The two mechanisms are compared on identical scores in four configurations, one per column of Figure~\ref{fig:realconv}. The first two columns use the plain reward score $r/\lambda$ of Section~2: at the deployed temperature $\lambda=1/\beta=1/20$, and at $\lambda=1/2$, the temperature of the synthetic example in Figure~1--so the second column is the direct real-data counterpart of that example. The last two columns use the GSI score of Section~4 without and with clipping ($\beta r+d$ versus $s_C=\beta r+\min(d,C)$; $\beta=20$, $C=0.45$), the latter being exactly the quantity on which ExpGSI and GSI select candidates. In all four configurations ExpBoN converges geometrically--within a handful of candidates for the reward scores, and by $n\approx64$ under the GSI scores--whereas SBoN decays polynomially throughout, matching Theorem~\ref{thm:pf_sbon_kl_tv} and Corollary~\ref{cor:ExpBoN_relative_reward} on real data. Two further observations. First, the likelihood term slows finite-$n$ convergence for both mechanisms--$d$ spreads the scores within a pool, leaving less probability mass near the pool maximum--and clipping partially restores the speed by capping the right tail. Second, slower convergence does not conflict with the compute savings of ExpGSI: convergence is governed by the score dispersion within a pool, whereas early exit is triggered by proximity to the fixed ceiling $U_C$, and the exit test is exact at every finite $n$.

\begin{figure*}[t]
\centering
\includegraphics[width=\textwidth]{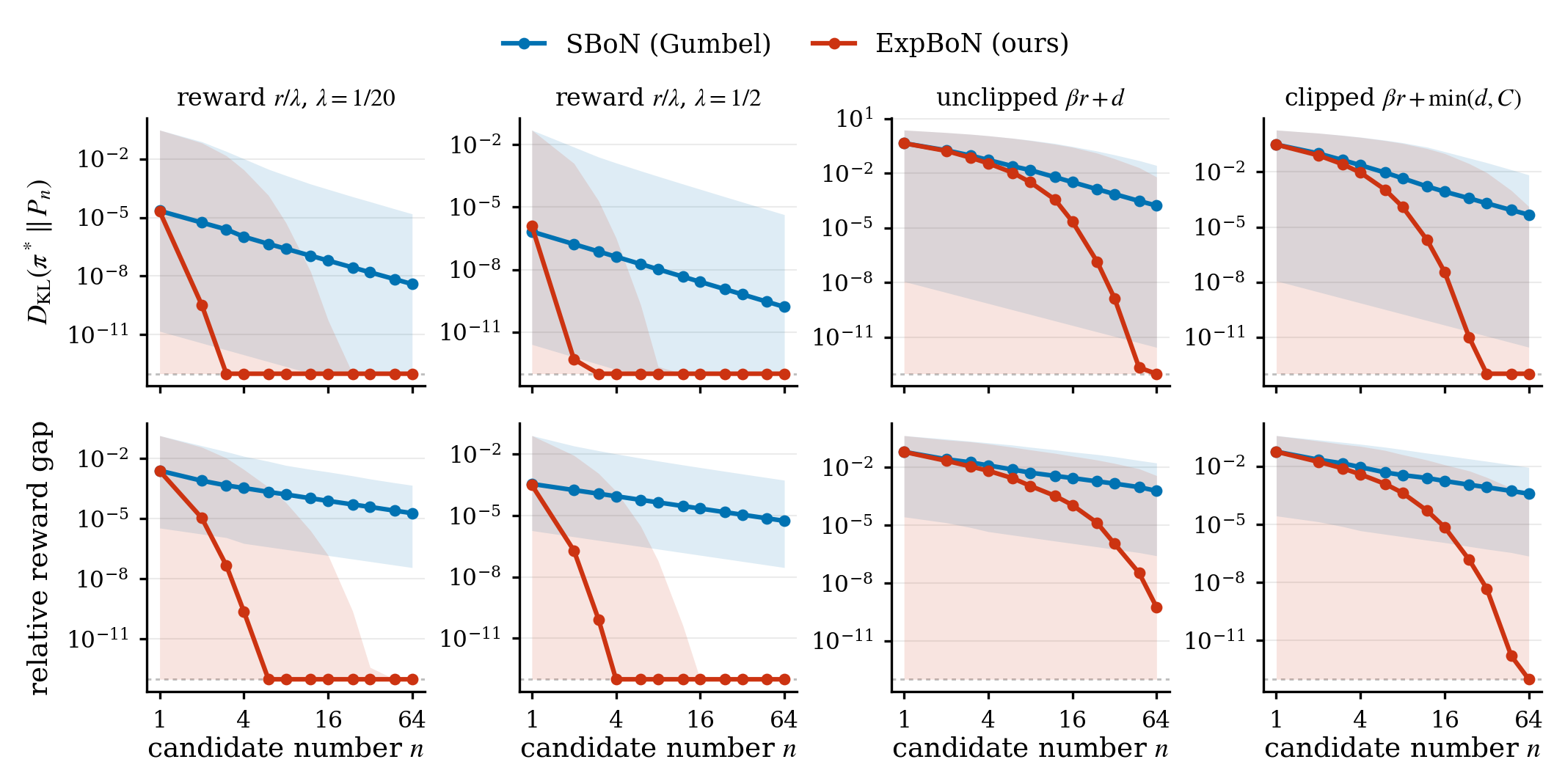}
\caption{
Exact finite-$n$ convergence of ExpBoN vs. SBoN on real LLM candidate pools (medians with 10--90\% bands). Columns: the reward score $r/\lambda$ of Section~2 at the deployed temperature $\lambda=1/20$ and at $\lambda=1/2$, the temperature of the synthetic example in Figure~1; the GSI score of Section~4 without clipping ($\beta r+d$) and with clipping ($s_C=\beta r+\min(d,C)$; $\beta=20$, $C=0.45$). Rows: KL divergence to the tilted target and relative centered-reward gap. Output laws are computed exactly by numerical integration; the dotted line marks numerical precision ($10^{-13}$). ExpBoN converges geometrically in every configuration while SBoN decays polynomially; comparing the last two columns, clipping speeds up convergence by capping the right tail of the scores.
}
\label{fig:realconv}
\end{figure*}

% \clearpage 
% \bibliography{ref}

\end{document}